\def\isarxiv{1} 
\definecolor{codegreen}{rgb}{0,0.6,0}
\definecolor{codegray}{rgb}{0.5,0.5,0.5}
\definecolor{codepurple}{rgb}{0.58,0,0.82}
\definecolor{backcolour}{rgb}{1.0,1.0,1.0}
\lstdefinestyle{mystyle}{
    backgroundcolor=\color{backcolour},   
    commentstyle=\color{codegreen},
    keywordstyle=\color{magenta},
    numberstyle=\tiny\color{codegray},
    stringstyle=\color{codepurple},
    basicstyle=\ttfamily\footnotesize,
    breakatwhitespace=false,         
    breaklines=true,                 
    captionpos=b,                    
    keepspaces=true,                 
    numbers=left,                    
    numbersep=5pt,                  
    showspaces=false,                
    showstringspaces=false,
    showtabs=false,                  
    tabsize=2
}
\renewcommand*{\citet}{\cite}
\renewcommand*{\citep}{\cite}
\theoremstyle{plain}
\newtheorem{theorem}{Theorem}[section]
\newtheorem{definition}[theorem]{Definition}
\newcommand{\R}{\mathbb{R}}
\newcommand*{\RN}[1]{\expandafter\@slowromancap\romannumeral #1@}
\begin{document}

\ifdefined\isarxiv

\date{}

\title{Discovering the Gems in Early Layers: Accelerating Long-Context LLMs with 1000x Input Token Reduction}
\author{
Zhenmei Shi\thanks{\texttt{
zhmeishi@cs.wisc.edu}. University of Wisconsin-Madison.}
\and 
Yifei Ming\thanks{\texttt{
yifei.ming@salesforce.com}. Salesforce AI Research.}
\and
Xuan-Phi Nguyen\thanks{\texttt{
xnguyen@salesforce.com}. Salesforce AI Research.}
\and
Yingyu Liang\thanks{\texttt{
yingyul@hku.hk}. The University of Hong Kong. \texttt{
yliang@cs.wisc.edu}. University of Wisconsin-Madison.} 
\and
Shafiq Joty\thanks{\texttt{
sjoty@salesforce.com}. Salesforce AI Research.}
}

\else

    \title{Discovering the Gems in Early Layers: Accelerating Long-Context LLMs with 1000x Input Token Reduction} 

\author{Antiquus S.~Hippocampus, Natalia Cerebro \& Amelie P. Amygdale \thanks{ Use footnote for providing further information
about author (webpage, alternative address)---\emph{not} for acknowledging
funding agencies.  Funding acknowledgements go at the end of the paper.} \\
Department of Computer Science\\
Cranberry-Lemon University\\
Pittsburgh, PA 15213, USA \\
\texttt{\{hippo,brain,jen\}@cs.cranberry-lemon.edu} \\
\And
Ji Q. Ren \& Yevgeny LeNet \\
Department of Computational Neuroscience \\
University of the Witwatersrand \\
Joburg, South Africa \\
\texttt{\{robot,net\}@wits.ac.za} \\
\AND
Coauthor \\
Affiliation \\
Address \\
\texttt{email}
}
%

\newcommand{\fix}{\marginpar{FIX}}
\newcommand{\new}{\marginpar{NEW}}

\maketitle 

\fi

\ifdefined\isarxiv
\begin{titlepage}
  \maketitle
  \begin{abstract}
Large Language Models (LLMs) have demonstrated remarkable capabilities in handling long context inputs, but this comes at the cost of increased computational resources and latency. Our research introduces a novel approach for the long context bottleneck to accelerate LLM inference and reduce GPU memory consumption. Our research demonstrates that LLMs can identify relevant tokens in the early layers before generating answers to a query. Leveraging this insight, we propose an algorithm that uses early layers of an LLM as filters to select and compress input tokens, significantly reducing the context length for subsequent processing.
Our method, GemFilter, demonstrates substantial improvements in both speed and memory efficiency compared to existing techniques, such as standard attention and SnapKV/H2O. Notably, it achieves a 2.4$\times$ speedup and 30\% reduction in GPU memory usage compared to SOTA methods. Evaluation on the Needle in a Haystack task shows that GemFilter significantly outperforms standard attention, SnapKV and demonstrates comparable performance on the LongBench challenge.
GemFilter is simple, training-free, and broadly applicable across different LLMs. Crucially, it provides interpretability by allowing humans to inspect the selected input sequence. These findings not only offer practical benefits for LLM deployment, but also enhance our understanding of LLM internal mechanisms, paving the way for further optimizations in LLM design and inference.
\ifdefined\isarxiv
Our code is available at \url{https://github.com/SalesforceAIResearch/GemFilter}.
\else
\fi

  \end{abstract}
  \thispagestyle{empty}
\end{titlepage}

{\hypersetup{linkcolor=black}
\tableofcontents
}
\newpage

\else

\begin{abstract}

\end{abstract}

\fi

\section{Introduction}

Large Language Models (LLMs) have demonstrated impressive abilities \citep{wtb+22,bce+23}  and found widespread application in various AI systems, such as ChatGPT~\citep{szk+22}, Gemini~\citep{gemini},
and 
Claude~\citep{a24}, and so on. They are also a fundamental component in building language-based AI agents that can orchestrate plans and execute complex tasks through interaction with external tools. A key requirement for many of these applications is the ability to process long-context inputs. This ability can also potentially eliminate the need of a retriever in retrieval augmented generation (RAG)~\citep{xu2024retrievalmeetslongcontext} or enhance its performance~\citep{jiang2024longrag}. Therefore, significant efforts have been made recently to build LLMs that support long context inputs. For instance, LLaMA 3.1~\citep{llama3},
Mistral \citep{mistral}, and Phi 3.5~\citep{phi3} now support input sequences of up to 128K tokens, while Gemini can handle inputs of up to 1M tokens. {However, processing such lengthy inputs comes at a substantial cost in terms of computational resources and time.} Therefore, accelerating the LLM generation speed while simultaneously reducing GPU memory consumption for long-context inputs is essential to minimize response latency and increase throughput for LLM API calls. 

One prominent optimization for fast text generation in decoder-only LLMs (i.e., using a causal attention mask) is the \emph{KV cache}. Specifically, there are two phases involved in auto-regressive generation. Given a long context input, the first is the \emph{prompt computation} phase, when the LLM computes the KV cache for all layers, storing the intermediate attention keys and values of the input tokens. Next, in the \emph{iterative generation} phase, the LLM generates tokens iteratively using the pre-computed KV cache, avoiding redundant computations. GPU memory usage and running time scale linearly with the KV cache size, meaning that the computational is high for long inputs.

To reduce GPU memory usage and running time during the iterative generation phase, H2O \citep{zsz+24} and SnapKV~\citep{lhy+24} introduce static methods to compress/evict the KV cache. These techniques can shrink the KV cache size from 128K to 1024 with negligible performance loss, resulting in faster speeds and lower GPU memory consumption during the iterative generation phase.
However, these methods do not improve the efficiency of the prompt computation phase, which becomes the dominant bottleneck as the input context lengthens. Thus, we ask:
\begin{center}
{\it Can we accelerate the speed and reduce memory usage during the prompt computation phase?}
\end{center}

\begin{figure}[t]
    \centering
    \includegraphics[width=1.0\linewidth]{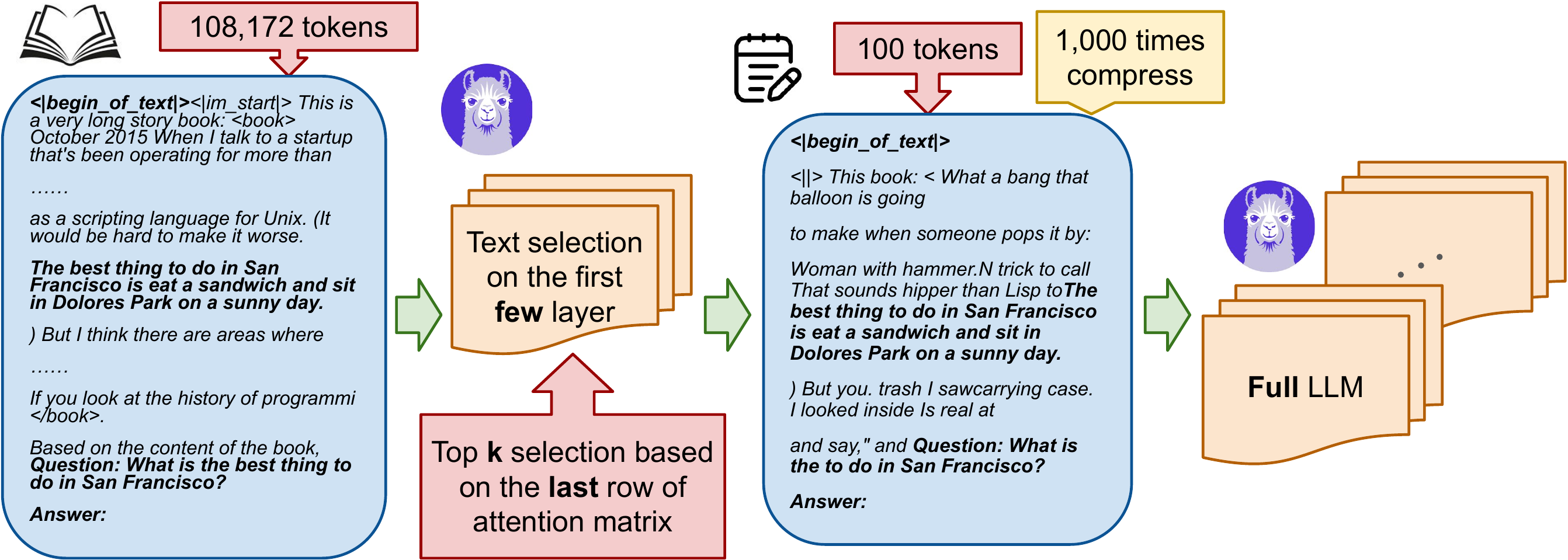}
    \caption{Illustration of our method GemFilter: generation with context selection based on early filter layers. We demonstrate a real Needle in a Haystack task (Section~\ref{sec:sub:needle}). The original input consists of 108,172 tokens, including the initial instruction, key message, and the query.  In the first step, we use the 13th layer of the LLM (LLaMA 3.1 8B Instruct) as a filter to compress the input tokens by choosing the top $k$ indices from the last row of the attention matrix. Notably, the selected input retains the initial instruction, key message, and query. GemFilter achieves a 1000$\times$ compression, reducing the input token length to 100.   In the second step, we feed the selected tokens for full LLM inference using a standard generation function, which produces the correct output. GemFilter significantly reduces running time and GPU memory with negligible performance loss.}
    \label{fig:intro}
\end{figure}

\begin{wrapfigure}{r}{0.35\textwidth}
\vspace{-0.4cm}
    \centering{\includegraphics[width=1.0\linewidth]{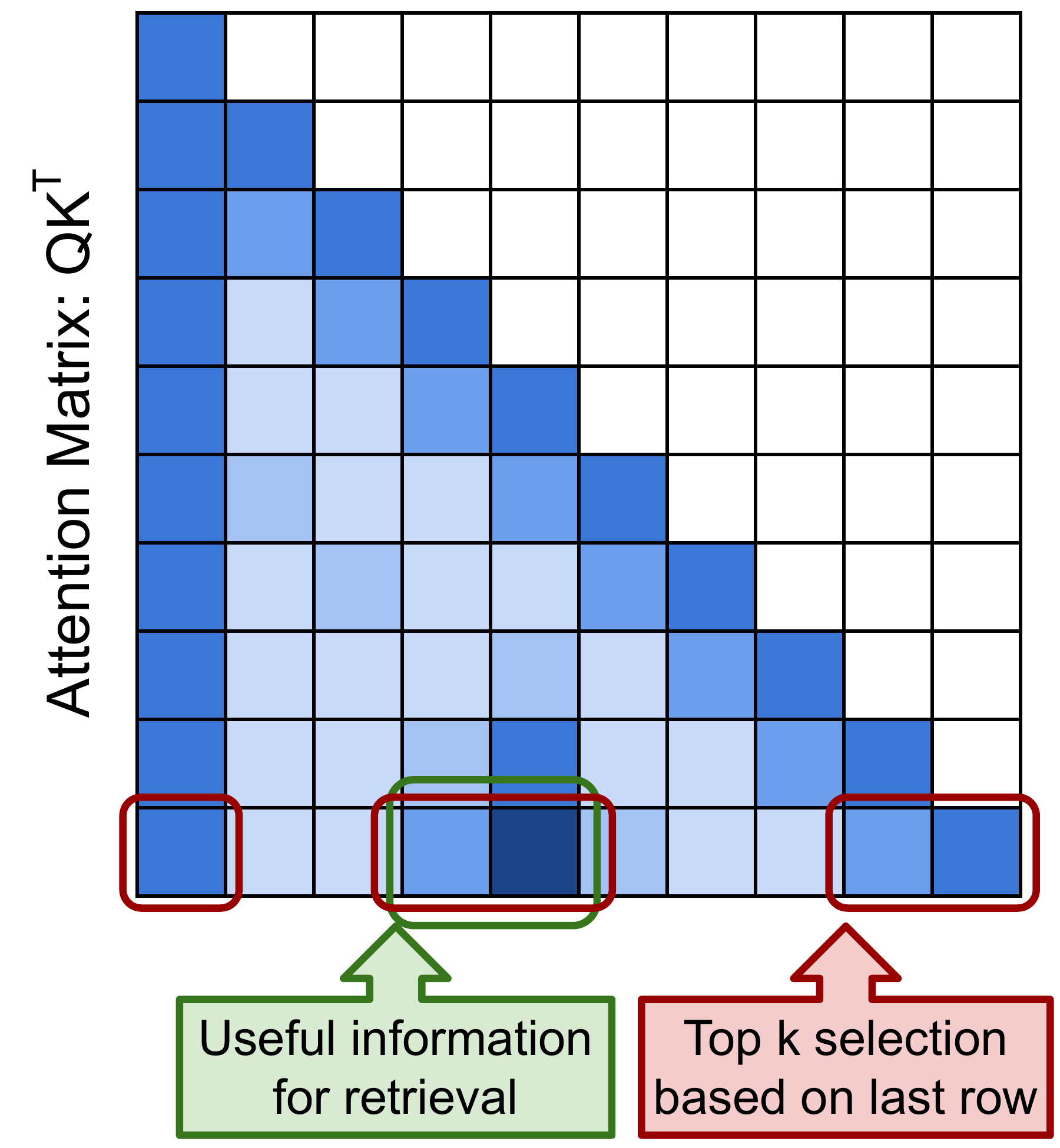}}
    \caption{The last row of attention matrices in early layers can locate answer-related tokens.}
    \label{fig:attention}
    \vspace{-0.4cm}
\end{wrapfigure}

We observe that when serving a query, LLMs often find the necessary information in the early layers, even before generating the answer. Specifically, the relevant tokens can be identified using the attention matrix from these early layers (Figure~\ref{fig:attention}), which we refer to as {\it filter layers}. Figure~\ref{fig:intro} provides a real example from the Needle in a Haystack task, where LLMs must find a small piece of information within a large context. For LLaMA 3.1 8B, we observe that the information needed to answer the query can be distilled from the attention matrix in any of the 13th-19th layers. Furthermore, LLMs explicitly summarize the required information in these filter layers. As a consequence, we only need to perform the prompt computation on a long context input for the filter layers,  allowing us to compress the input tokens into a smaller subset (e.g., reducing from 128K tokens to 100), saving both time and GPU memory. We then feed the selected tokens for full model inference and proceed with a standard generation function. Algorithm~\ref{alg:select_gen} in Section \ref{sec:method} presents our method GemFilter.

\begin{figure}[!ht]
    \centering
    {\includegraphics[width=0.495\linewidth]{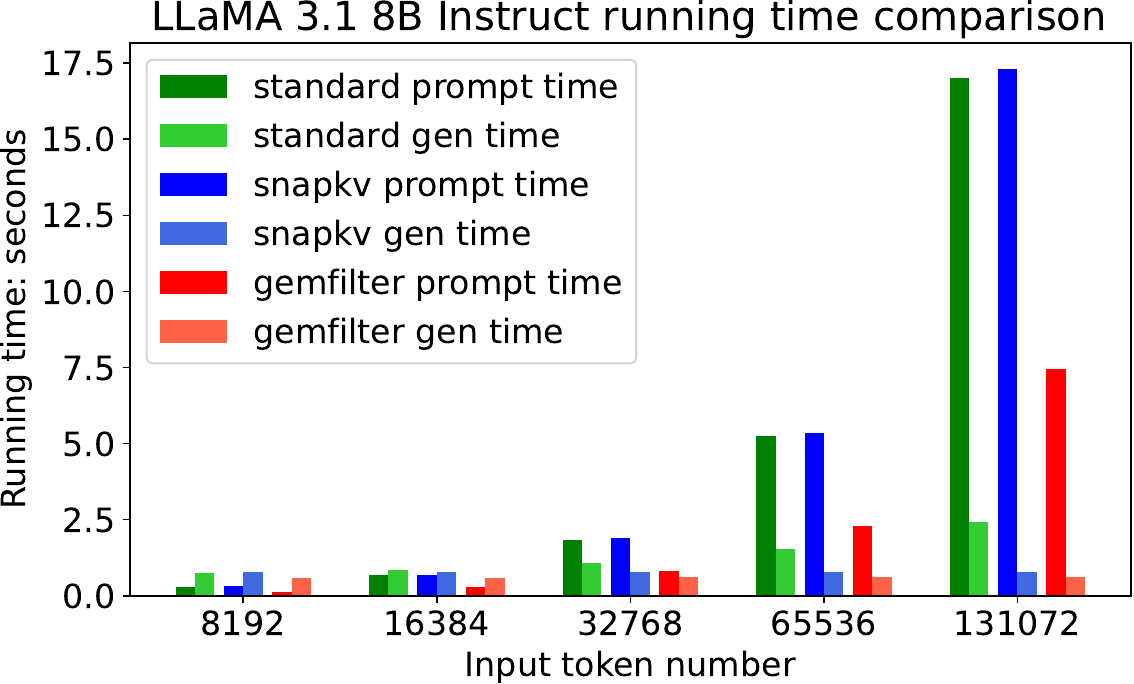}}
    {\includegraphics[width=0.495\linewidth]{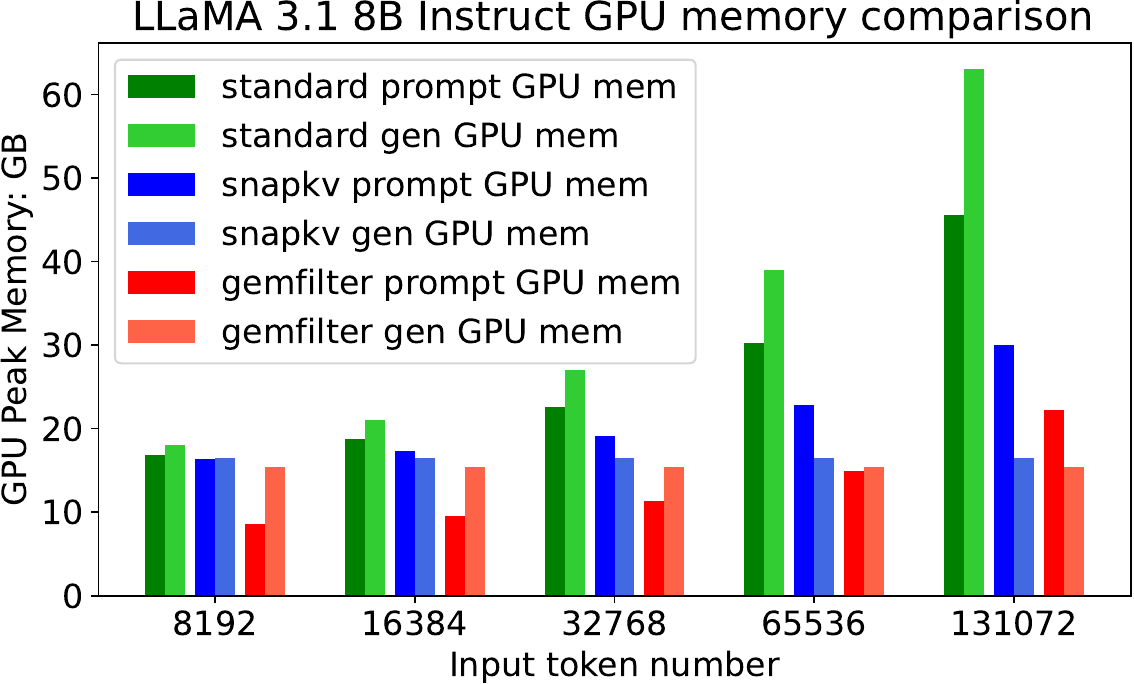}}\\
    \caption{Comparison of time and GPU memory usage across different methods on LLaMA 3.1 8B Instruct. `gemfilter' represents our method, using the 13th layer as the filter. It achieves a 2.4$\times$ speedup and reduces GPU memory usage by 30\% compared to SnapKV. Additional results can be found in Section~\ref{sec:sub:complexity_exp}.}
    \label{fig:complexity_llama}
\end{figure}

As shown in Figure~\ref{fig:complexity_llama}, GemFilter runs faster and consumes less GPU memory than SnapKV/H2O and standard attention (full KV cache) during the prompt computation phase. During the iterative generation phase, GemFilter has the same running time and GPU memory consumption as SnapKV/H2O, both of which outperform standard attention. We discuss the complexity further in Section~\ref{sec:sub:complexity} theoretically and in Section~\ref{sec:sub:complexity_exp} empirically. GemFilter significantly outperforms standard attention and SnapKV on the Needle in a Haystack benchmark (Section~\ref{sec:sub:needle}). Additionally, on LongBench, a multi-task benchmark designed to rigorously evaluate long-context understanding across various datasets, GemFilter achieves performance comparable to SnapKV/H2O (Section~\ref{sec:sub:long}). 
Furthermore, our ablation study in Section~\ref{sec:sub:ablation} show that our method is quite robust to the filter layer selection strategy.

\paragraph{Our contributions and advantages are:}
\begin{itemize}[leftmargin=*]
    \item We found that LLMs can identify relevant tokens using attention matrices in the early layers, suggesting  crucial information is recognized before the answer generation. Furthermore, LLMs explicitly summarize this information within specific filter layers. This observation provides insights into LLM mechanisms and opens avenues for LLM understanding and algorithm design.
    \item Leveraging this insight, we develop GemFilter, formulated in Algorithm~\ref{alg:select_gen}, an inference strategy which utilizes early LLM layers as a filter to select and compress input tokens into a small subset to be processed by the full model (Figure~\ref{fig:intro}). GemFilter achieves a 2.4$\times$ speedup and reduces GPU memory consumption by 30\% compared to the state-of-the-art methods like SnapKV.
    \item GemFilter significantly outperforms both standard attention (all KV cache) and SnapKV on the Needle in a Haystack benchmark (Section~\ref{sec:sub:needle}), while maintaining performance comparable to SnapKV/H2O on the LongBench benchmark (Table~\ref{tab:longbench}).
    \item Our approach offers several advantages: it is simple, training-free, and broadly applicable to various LLMs. Furthermore, it enhances interpretability by allowing humans to directly inspect the selected token sequence. 
\end{itemize}
 
\section{Related Works}\label{sec:related}
\paragraph{Generation Speed-up with Long Context Input.} One effective technique to accelerate auto-regressive generation is KV cache compression/eviction. During generation, LLMs store the previous key and value matrices to reduce computational complexity. However, when the input context is long (e.g., 128K tokens), the memory consumption and running time associated with the KV cache dominate iterative generation. Many studies have focused on KV cache eviction. For instance, \citet{gzl+23} evict long-range contexts on attention heads to prioritize local contexts, using the KV cache only for heads that broadly attend to all tokens. Streaming LLM~\citep{xtc+23} introduces an attention sink that retains only the first few tokens and the latest $k$ tokens in the KV cache to enable fast streaming generation. 
LOOK-M~\citep{wwl+24} applies KV eviction in the multimodality so that the model only needs to look once for the image. 
LongWriter~\citep{bzl+24} uses KV eviction to enable LLMs to generate coherent outputs exceeding 20,000 words.
MInference 1.0~\citep{jlz+24} determines the optimal KV cache pattern for each attention head offline and dynamically builds sparse indices based on the assigned query during inference.
QuickLLaMA~\citep{lsh+24} classifies the KV cache to many subsets, e.g., query tokens, context tokens, global tokens, and local tokens, and only preserves some types of tokens in the KV cache. 
ThinK~\citep{xjd+24} proposes a query-dependent KV cache pruning method by pruning the least significant channel dimensions of the KV cache.
H2O~\citep{zsz+24} retains only tokens contributing to cumulative attention. 
SnapKV~\citep{lhy+24} evicts non-essential KV positions for each attention head based on observation windows. 
While the aforementioned studies focus on eviction and compression of the KV cache during the prompt computation phase to optimize the iterative generation phase, they do not reduce the running time or GPU memory usage during the prompt computation phase. In contrast, our method, GemFilter, achieves both reduced running time and GPU memory usage in the prompt computation phase, as well as during the iterative generation phase. 
We provide a more detailed comparison in Section~\ref{sec:sub:compare}.

More related to our work, \citet{ldlg23} compress input sequences by pruning redundancy in the context, making inputs more compact. However, they need to keep 50\% of input tokens to keep the LLMs' performance, whereas GemFilter achieves comparable performance by only reserving 1\% of input tokens. For further details, we refer the reader to Section~\ref{sec:sub:needle}.
\section{Method}\label{sec:method}

\subsection{Notations and Preliminary}\label{sec:preliminary}
While the Transformer and self-attention architecture \citep{vsp+17} have already become overwhelmingly popular, we first introduce certain preliminary definitions to provide a better methodological connection to our proposed GemFilter method in Section \ref{sec:gem_filter_definition}.

For any positive integer $n$, we use $[n]$ to denote the set $\{1,2,\cdots, n\}$. We use $\circ$ to denote function composition and $\odot$ to denote the Hardamard product. 
Let $n$ be the input token/prompt length, $d$ the hidden feature dimension, and $\mathcal{V}$ the vocabulary set.
We now introduce the key concept of attention and transformers. 
We first define the query, key, and value matrices. 
It is important to note that during text generation, the key and value matrices are also referred to as the KV cache, as they are stored in GPU memory to reduce running time during the iterative prediction of the next token.

\begin{definition} [Single layer self-attention] \label{def:single_layer_self_attn}
Let $Q\in \R^{n\times d}$ be the query matrix , $K\in \R^{n\times d}$ the key cache, and $V\in \R^{n\times d}$ the value cache.
Let $M_c \in \{0,1\}^{n\times n}$ be the causal attention mask, where $(M_c)_{i,j}$ is $1$ if $i \ge j$ and  $0$ otherwise.
The self-attention function $\mathsf{Attn}$ is defined as: 
\begin{align*}
    \mathsf{Attn}(Q,K,V) = M_c \odot  \mathsf{Softmax}(Q K^\top / \sqrt{d} ) \cdot V
\end{align*}
\end{definition} 
\begin{definition} [Multi-layer transformer] \label{def:multi_layer_self_attn}
Let $T \in \mathcal{V}^n$ represent the input tokens, and let $m$ denote the number of transformer layers. Let $g_i$ represent components in the $i$-th transformer layer other than self-attention, such as layer normalization, residual connections, and the MLP block, where $g_i: \R^{n \times d} \to \R^{n \times d}$ for any $i \in \{0, 1, \dots, m\}$. 
Let $\mathsf{Attn}_i$ denote the self-attention module in the $i$-th transformer layer. 
We define an $m$-layer transformer $\mathsf{F}_{1:m}: \mathcal{V}^n  \to \R^{n \times d}$ as 
\begin{align*}
    \mathsf{F}_{1:m}(T) := g_m \circ \mathsf{Attn}_m \circ g_{m-1} \circ \dots \circ g_1 \circ \mathsf{Attn}_1 \circ g_0 \circ \mathcal{E}(T) ~~ \in \R^{n \times d},
\end{align*}
where $\mathcal{E}$ is the input embedding function mapping the input tokens to hidden features using the vocabulary dictionary, i.e., $\mathcal{E}(T) \in \R^{n \times d}$.
\end{definition}
Note that the above definitions use a single attention head for simplicity, but in practice, multi-head attention is used \citep{vsp+17}. 
\subsection{Our Algorithm: GemFilter}\label{sec:gem_filter_definition}

We present our method, GemFilter, in Algorithm~\ref{alg:select_gen}. We also present PyTorch code in Appendix~\ref{app:sub:code} for the reader's interests. The high-level idea is to run the LLM twice. In the first pass, we run only the early layers of the LLM to select the key input tokens. This corresponds to the prompt computation phase (Line~\ref{line:early}-\ref{line:sort} of Algorithm~\ref{alg:select_gen}).
This process selects the top $k$ tokens that receive the most attention from the last query token. 
In the second pass, we feed the selected tokens to the full LLM and run the generation function, corresponding to the iterative generation phase (Line~\ref{line:gen}). Below, we explain Algorithm~\ref{alg:select_gen} step by step. 

\begin{algorithm}[!t]\caption{GemFilter: Generation with Token Selection Based on Early Layers}
\label{alg:select_gen}
\begin{algorithmic}[1]
\Procedure{SelectionGen}{$\mathsf{F}_{1:m}, T \in [\mathcal{V}]^n, r \in [m], k \in [n]$} 
\State \textcolor{gray}{\Comment{\small 
$\mathsf{F}_{1:m}:$ An $m$-layer transformer network; $T$: input sequence of tokens}}
\State \textcolor{gray}{\Comment{\small $r$: filter layer index for token selection; $k$: number of selected tokens}}
\State Get $Q^{(r)}, K^{(r)}$ by doing a $r$-layer forward pass: $\mathsf{F}_{1:r}(T)$ \label{line:early}
\State \textcolor{gray}{\Comment{\small $Q^{(r)}, K^{(r)} \in \R^{n\times d}$: the $r$-th layer query, key}} 
\State $J \gets \mathsf{topk\_index}({Q^{(r)}_{n}} {K^{(r)}}^\top, k )$ \textcolor{gray}{\small \Comment{ $Q^{(r)}_{n}$: the last row of $Q^{(r)}$; ${Q^{(r)}_{n}} {K^{(r)}}^\top \in \R^{n}$} are attn scores} \label{line:topk}
\State Sort the indices in $J$ \textcolor{gray}{\small \Comment{ $J \subseteq [n]$ and $|J| = k$}} \label{line:sort}
\State {\bf return} $\textsc{Gen}(\mathsf{F}_{1:m}, T_J)$  \textcolor{gray}{\Comment{\small \textsc{Gen} is generation function, $T_J \in [\mathcal{V}]^k$ is a sub-sequence of $T$ on $J$}} \label{line:gen} 
\EndProcedure 
\end{algorithmic}
\end{algorithm}

The input of the algorithm is an $m$-layer transformer $\mathsf{F}_{1
}$ (Definition~\ref{def:multi_layer_self_attn}), an input token sequence $T \in \mathcal{V}^n$, and two hyperparameters $r \le m, k \le n$, where $r$ represents the index of the filter layer for context token selection and $k$ denotes the number of tokens to select. For example, in the case of LLaMA 3.1 8B Instruct (Figure~\ref{fig:intro}), we have $m=32$, $r=13$, and $k=1024$.

In the first step (Line~\ref{line:early}), we run only the first $r$ layers forward to serve as a filter, obtaining the $r$-th layer's query and key matrices, $Q^{(r)}$ and $K^{(r)}$. Note that we do not need to run all layers of the LLM on a long context input, thereby saving both computation time and memory (see detailed analysis in Section~\ref{sec:sub:complexity}).
In Line~\ref{line:topk}, we select token indices based on the $r$-th layer attention matrix. The selection is made by identifying the $k$ largest values from the last row of the attention matrix, i.e.,  the inner product between the last query token $Q^{(r)}_{n}$ and all key tokens $K^{(r)}$. For multi-head attention, the top-$k$ indices are selected based on the summation of the last row across the attention matrices of all heads.
For instance, suppose we have $h$ attention heads, and let $Q^{(r, j)}, K^{(r, j)} \in \R^{n\times d}$ represent the query and key matrices for the $r$-th layer and $j$-th attention head. Then, we compute $J \gets \mathsf{topk\_index}( \sum_{j=1}^{h}{Q^{(r,j)}_{n}} {K^{(r,j)}}^\top, k )$, where $J$ is a set of top $k$ index selection. Note that our method uses a single index set $J$, whereas SnapKV~\citep{lhy+24} and H2O~\citep{zsz+24} use different index sets for each layer and attention head, resulting in $m \cdot h$ index sets in total. A detailed discussion is provided in Section~\ref{sec:sub:compare}.

In Line~\ref{line:topk}, $J$ is sorted by inner product values. However, we need to re-sort $J$ so that the selected tokens follow their original input order, ensuring, for example, that the $\langle bos \rangle$ token is placed at the beginning. Line~\ref{line:sort} performs this reordering operation.
Finally, in Line~\ref{line:gen}, we can run any language generation function using the selected tokens $T_J$, which is a sub-sequence of $T$ on the index set $J$, across all layers. This generation is efficient as the input context length is reduced from $n$ to $k$, e.g., from 128K to 1024 tokens in Figure~\ref{fig:intro}. Below, we provide a formal time complexity analysis.

\subsection{Running Time and Memory Complexity Analysis}\label{sec:sub:complexity}

The results of our analysis on time complexity and GPU memory consumption are presented in Theorem~\ref{thm:complexity_informal} below, with the proof deferred to Appendix~\ref{app:proof}.

\begin{theorem}[Complexity analysis]\label{thm:complexity_informal}
Let $n$ be the input sequence (prompt) length and $d$ the hidden feature dimensions. In our Algorithm~\ref{alg:select_gen}, GemFilter uses the $r$-th layer as a filter to select $k$ input tokens. Let SnapKV and H2O also use $k$ as their cache size. Assume the LLM has $m$ attention layers, each with $h$ attention heads, and each transformer layer's parameters consume $w$ GPU memory. Assuming that we generate $t$ tokens with the $\textsc{Gen}$ function and $n \ge \max\{d,k,t\}$, the following table summarizes the complexity for standard attention, SnapKV and H2O, and GemFilter:

\begin{center}
\resizebox{\columnwidth}{!}{\begin{tabular}{lc|c|c|c}
\toprule
\multicolumn{2}{c|}{Complexity}  & Standard attention  & SnapKV and H2O & GemFilter \\
\midrule
\multirow{2}{*}{Time} & Prompt Comp. & $\Theta( mh n^2d )$ & $\Theta( mh n^2d )$ & $\Theta( r hn^2 d )$\\
  & Iter. generation & $\Theta( mh(nt + t^2)d )$ & $\Theta( m h(kt + t^2) d )$ & $\Theta(m h(k^2 + t^2) d )$ \\
  \midrule
\multirow{2}{*}{GPU mem.} & Prompt Comp.  & $mw + 2 mh n d$ & $ mw + 2hnd + 2mhkd $ & $rw + 2hnd $ \\
&  Iter. generation & $mw + 2 mh (n + t) d$ & $ mw + 2mh(k+t)d$ & $mw+2mh(k+t) d$ \\
\bottomrule     
\end{tabular}}
\end{center}
\end{theorem}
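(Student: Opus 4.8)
The plan is to account separately for the two phases (prompt computation and iterative generation) and, within each phase, to track time (FLOP count, up to constants) and peak GPU memory (parameters plus stored activations, chiefly the KV cache). For each of the three methods I would establish the entries column by column, reusing the self-attention and multi-layer transformer definitions (Definitions~\ref{def:single_layer_self_attn} and~\ref{def:multi_layer_self_attn}); the only genuinely new ingredient is how GemFilter truncates the forward pass to $r$ layers and shrinks the sequence length to $k$ before the second pass. Throughout I would use the standing assumption $n \ge \max\{d,k,t\}$ to absorb lower-order terms into the dominant ones (e.g.\ to drop the $nd$ cost of per-layer MLPs and layernorms against $n^2 d$, and to write $kt + t^2$ rather than $kt + k + t^2$).

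First, the \textbf{prompt-computation time}. One transformer layer on a length-$n$ input costs $\Theta(n^2 d)$ per head for the $QK^\top$ product, the softmax-weighted sum with $V$, and the projections, so $\Theta(h n^2 d)$ per layer and $\Theta(m h n^2 d)$ for a full $m$-layer pass; standard attention and SnapKV/H2O both pay this since SnapKV/H2O compute the full KV cache before evicting. GemFilter instead runs only Lines~\ref{line:early}--\ref{line:sort}, i.e.\ a forward pass through the first $r$ layers, giving $\Theta(r h n^2 d)$; the $\mathsf{topk\_index}$ and sorting steps cost $O(n h)$ and $O(k\log k)$ respectively, which are dominated. Next, the \textbf{iterative-generation time}: generating $t$ tokens with a KV cache that grows from its initial size $s$ to $s+t$ costs $\Theta(m h (s t + t^2) d)$ by summing the per-step attention cost $\Theta(m h (s+i) d)$ over $i \in [t]$. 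Standard attention has $s = n$; SnapKV/H2O have $s = k$; GemFilter re-runs the \emph{full} $m$-layer model on the length-$k$ selected subsequence $T_J$ inside $\textsc{Gen}$, which itself incurs a prompt-style cost $\Theta(m h k^2 d)$ on top of the $\Theta(m h (k t + t^2) d)$ generation cost, so the combined bound is $\Theta(m h (k^2 + t^2) d)$ after using $k \le n$ to merge the $kt$ term.

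For \textbf{GPU memory}, the parameter footprint is $mw$ in every case during the full-model steps, and $rw$ for GemFilter during its first pass (only $r$ layers are loaded/active). On top of parameters we count the stored activations. In prompt computation, standard attention stores the full KV cache of all layers, $2 m h n d$; SnapKV/H2O must materialize one layer's full KV cache at a time ($2 h n d$) plus retain the compressed cache $2 m h k d$; GemFilter's first pass stores only the $r$-layer KV tensors needed to reach layer $r$, which under $n \ge k$ is dominated by the current full-length layer, $2 h n d$ (the earlier layers' caches can be discarded once consumed, matching the $rw$ bookkeeping). In iterative generation all three run the full model on a cache of size $k+t$ (for GemFilter, $T_J$ has length $k$), giving $mw + 2 m h (k+t) d$; standard attention has $n+t$ in place of $k+t$. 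Assembling these gives exactly the table. The main obstacle is being careful and explicit about the activation-memory accounting in the prompt phase — in particular justifying that SnapKV/H2O need a full-length per-layer buffer while GemFilter does not need to keep all $r$ layers' full-length caches simultaneously — since the time bounds are routine once the per-layer cost is fixed, whereas the memory column depends on which intermediates are assumed to be freed and when.
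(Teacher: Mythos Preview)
Your proposal is correct and follows essentially the same decomposition as the paper's own proof: a per-method, per-phase accounting of attention FLOPs and of (parameter $+$ KV-cache) memory, with $n \ge \max\{d,k,t\}$ used to discard lower-order terms. One small slip: in the GemFilter iterative-generation time, the $kt$ term is absorbed because $kt \le k^2 + t^2$ (so $k^2 + kt + t^2 = \Theta(k^2 + t^2)$), not because $k \le n$; this does not affect the final bound.
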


Recall that there are two phases in text generation. The first phase is \emph{prompt computation}, which involves attention computation on the long context input tokens and generating the KV cache. The second phase is \emph{iterative generation}, where auto-regressive generation occurs based on the pre-computed KV cache. Theorem~\ref{thm:complexity_informal} demonstrates that GemFilter is faster and consumes less GPU memory than SnapKV/H2O and standard attention during the prompt computation phase. Additionally, during the iterative generation phase, GemFilter has the same running time and GPU memory consumption as SnapKV/H2O, which is significantly better than standard attention. This conclusion aligns with our experimental results in Section~\ref{sec:sub:complexity_exp}.

\paragraph{Case Study.} Let us consider the case $n \gg k \approx t$, e.g., $n = $128K, $k=t=1024$ and $r < m$.
During the prompt computation phase,  we have the running time:
\begin{align*}
    & ~ \text{Standard attention} :  \text{SnapKV/H2O}  :  \text{GemFilter} =  \Theta(m:m:r),
\end{align*}
and the GPU memory consumption:
\begin{align*}
    & ~\text{Standard attention}  :  \text{SnapKV/H2O}  :  \text{GemFilter}  \approx mw + mhnd : mw + hnd: rw + hnd, 
\end{align*}
We see that GemFilter has a lower time complexity and less GPU memory consumption than standard attention, SnapKV, and H2O. 
During the iterative generation phase,  we have the running time:
\begin{align*}
    & ~ \text{Standard attention}  :  \text{SnapKV/H2O}  :  \text{GemFilter}  =  \Theta(n:k:k),
\end{align*}
and the GPU memory consumption:
\begin{align*}
    \text{Standard attention}  :  \text{SnapKV/H2O}  :  \text{GemFilter} \approx w/hd + 2n : w/hd + 4k: w/hd + 4k, 
\end{align*}
As such, GemFilter has the same time complexity and GPU memory consumption as SnapKV/H2O, while significantly outperforming the standard attention.

The running time bottleneck for all methods occurs during prompt computation, which takes $\Theta(mhn^2d)$ for standard attention, SnapKV, and H2O.  In contrast, GemFilter only requires $\Theta(r hn^2d)$ for prompt computation, as it only processes the early layers of the LLMs to select and compress the input tokens during the first run. See detailed proof in Appendix~\ref{app:proof}.

Note that the GPU memory bottleneck for standard attention occurs during iterative generation, while for other methods, the memory bottleneck arises during prompt computation due to the reduced KV cache. GemFilter consumes less GPU memory than SnapKV and H2O because it only requires loading some layer model weights when processing the long context input in its first run. Our empirical results in Section~\ref{sec:sub:complexity_exp} support our complexity analysis findings.

\subsection{Comparison with Other Methods}\label{sec:sub:compare}

GemFilter reduces both running time and GPU memory usage in both the prompt computation and iterative generation phases, whereas SnapKV~\citep{lhy+24} and H2O~\citep{zsz+24} focus only on the iterative generation phase. During the prompt computation phase, standard attention computes and stores the entire KV cache for all layers in GPU memory, which is used during the  generation phase. SnapKV and H2O, on the other hand, compute the entire KV cache for all layers but only store a portion of it in GPU memory (e.g., $k=1024$). They use the selected KV cache for memory-efficient generation. SnapKV selects important clustered positions of the KV cache from an `observation' window located at the end of the prompt, while H2O greedily drops tokens based on cumulative attention scores to retain only a small portion of the KV cache.
In contrast, GemFilter avoids computing the KV cache for all layers during the prompt computation phase.

Compared to SnapKV and H2O, there are two additional differences. First, SnapKV and H2O maintain separate index sets for each layer and attention head, resulting in $m \cdot h$ index sets in total. This leads to different behaviors across attention heads, making their intermediate mechanisms more difficult to interpret.
On the other hand, GemFilter uses a single index set, $J$, allowing for easier interpretability by enabling the printing of the selected sequence for human review before the second run (see a real example in Figure~\ref{fig:intro}). 
Another distinction lies in how positional embeddings are handled. In SnapKV and H2O, the maximum positional embedding distance is $n + t$, as the same positional embedding is used in both the prompt computation and iterative generation phases. However, in GemFilter’s second run, the maximum positional embedding distance is reduced to $k + t$ because the input token length is reduced from $n$ to $k$, and the $\mathsf{RoPE}$ function\footnote{$\mathsf{RoPE}$ is the rotary positional embedding~\citep{sal+24}, encoding the positional information of tokens.} is re-computed. This reduction makes GemFilter more efficient, as the model can better handle shorter input sequences, as demonstrated in Figure~\ref{fig:needle_combine} (a).
\section{Experiments}

\paragraph{Model and Datasets.} 
We evaluated our approach using three popular long-context models: LLaMA 3.1 8B Instruct\footnote{\scriptsize\url{https://huggingface.co/meta-llama/Meta-Llama-3.1-8B-Instruct}} \citep{llama3}, Mistral Nemo 12B Instruct\footnote{\scriptsize\url{https://huggingface.co/mistralai/Mistral-Nemo-Base-2407}} \citep{mistral}, and Phi 3.5 Mini 3.8B Instruct\footnote{\scriptsize\url{https://huggingface.co/microsoft/Phi-3.5-mini-instruct}} \citep{phi3}, all of which support an input token length of 128K. We compared our method, GemFilter, against standard attention and two state-of-the-art methods, SnapKV~\citep{lhy+24} and H2O~\citep{zsz+24}\footnote{While there are many other generation acceleration methods, they may not be directly comparable to ours as they use orthogonal techniques. We refer the reader to Section~\ref{sec:related} for further details.}. For our experiments, we used two popular datasets: Needle in a Haystack~\citep{needle} (Section~\ref{sec:sub:needle}) and LongBench~\citep{blz+23} (Section~\ref{sec:sub:long}). More implementation details are provided in Appendix~\ref{app:sub:implement}.

\begin{figure}[!ht]
    \centering
    \subfloat[\bf All KV. Mistral Nemo average score: 0.486; LLaMA 3.1 average score: 0.841.]{\includegraphics[width=0.5\linewidth]{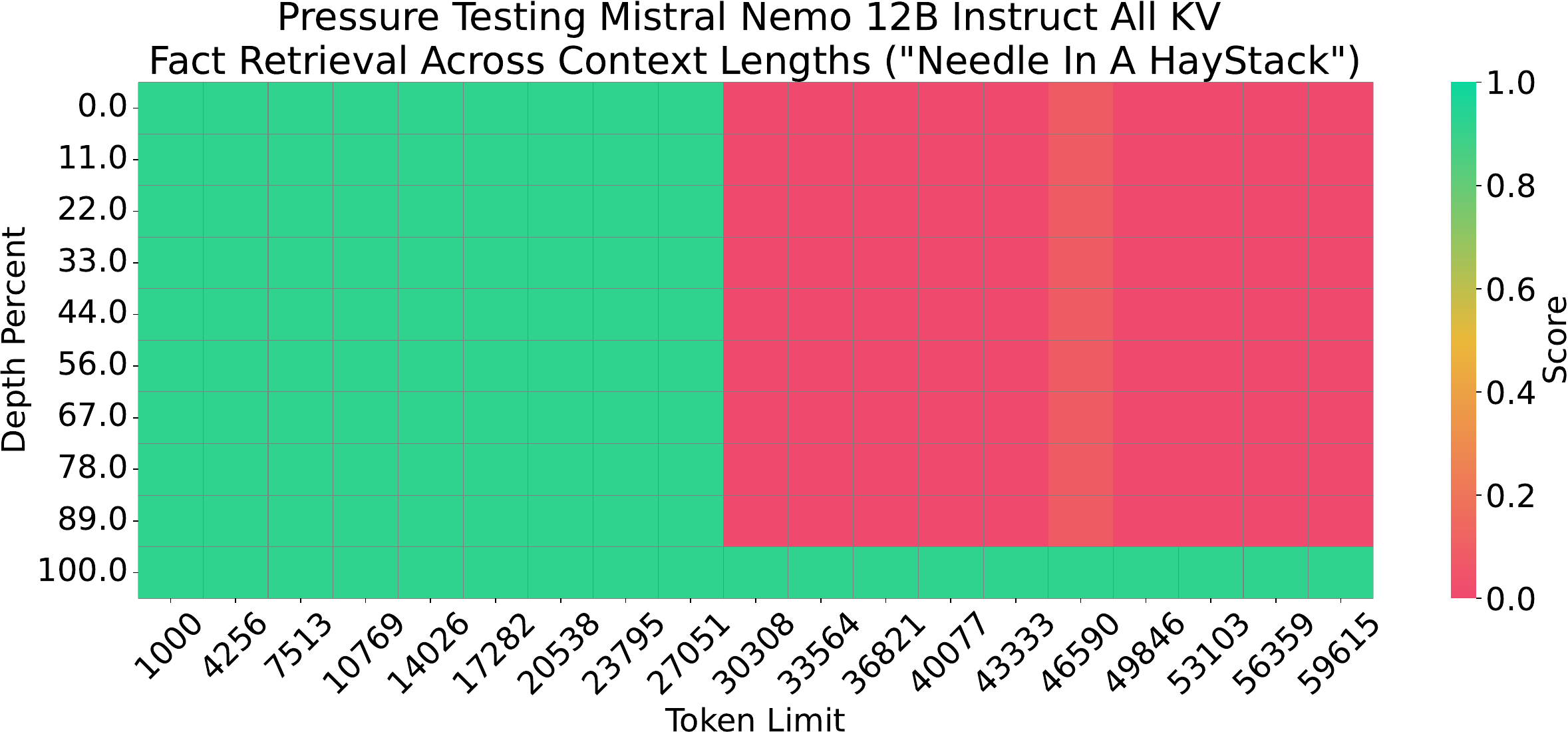} 
    \includegraphics[width=0.5\linewidth]{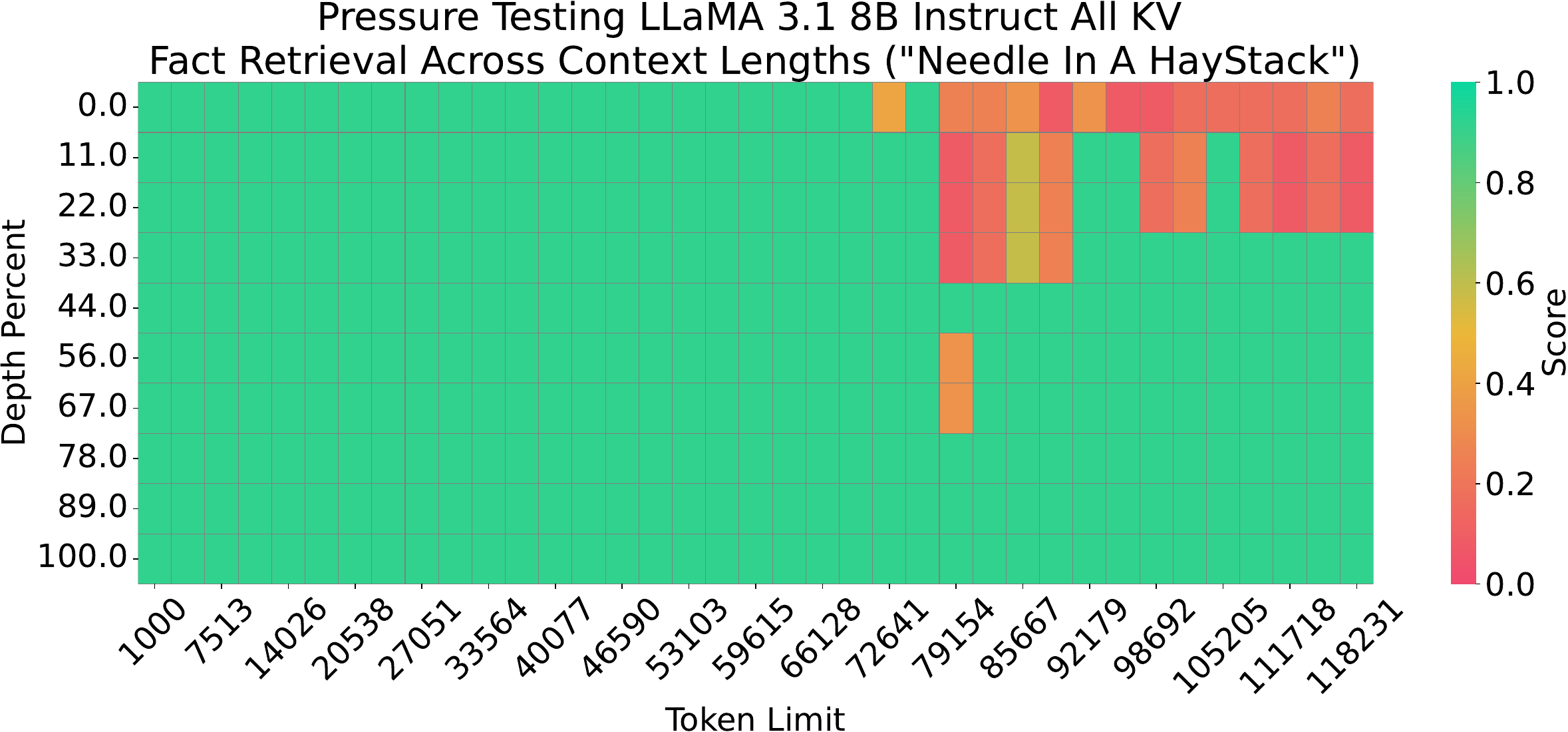}}\\
    \subfloat[\bf SnapKV-1024. Mistral Nemo average score: 0.494; LLaMA 3.1 average score: 0.749.]{\includegraphics[width=0.5\linewidth]{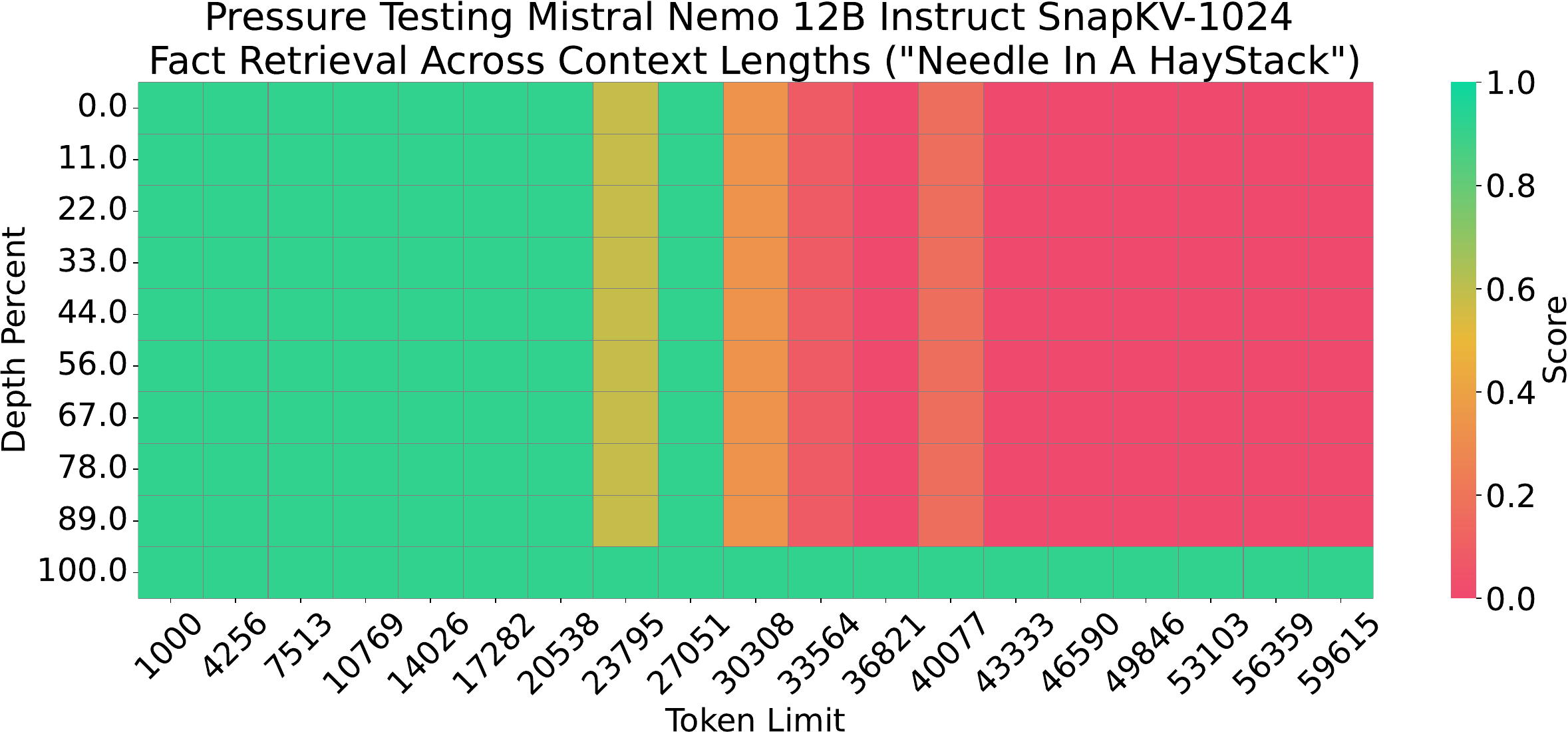}
    \includegraphics[width=0.5\linewidth]{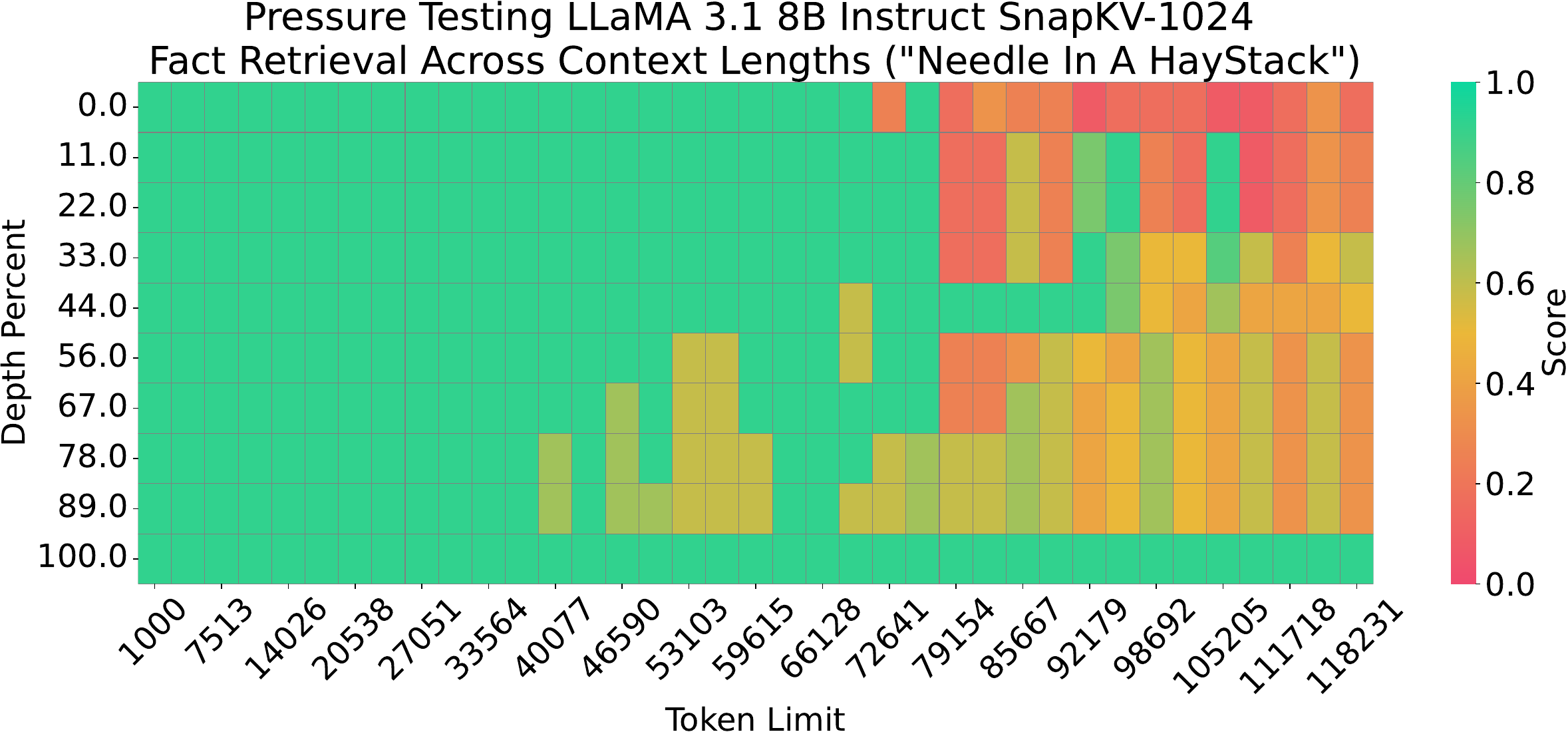}}\\
    \subfloat[ \bf GemFilter-1024. Mistral Nemo average score: 0.838; LLaMA 3.1 average score: 0.887.]{\includegraphics[width=0.5\linewidth]{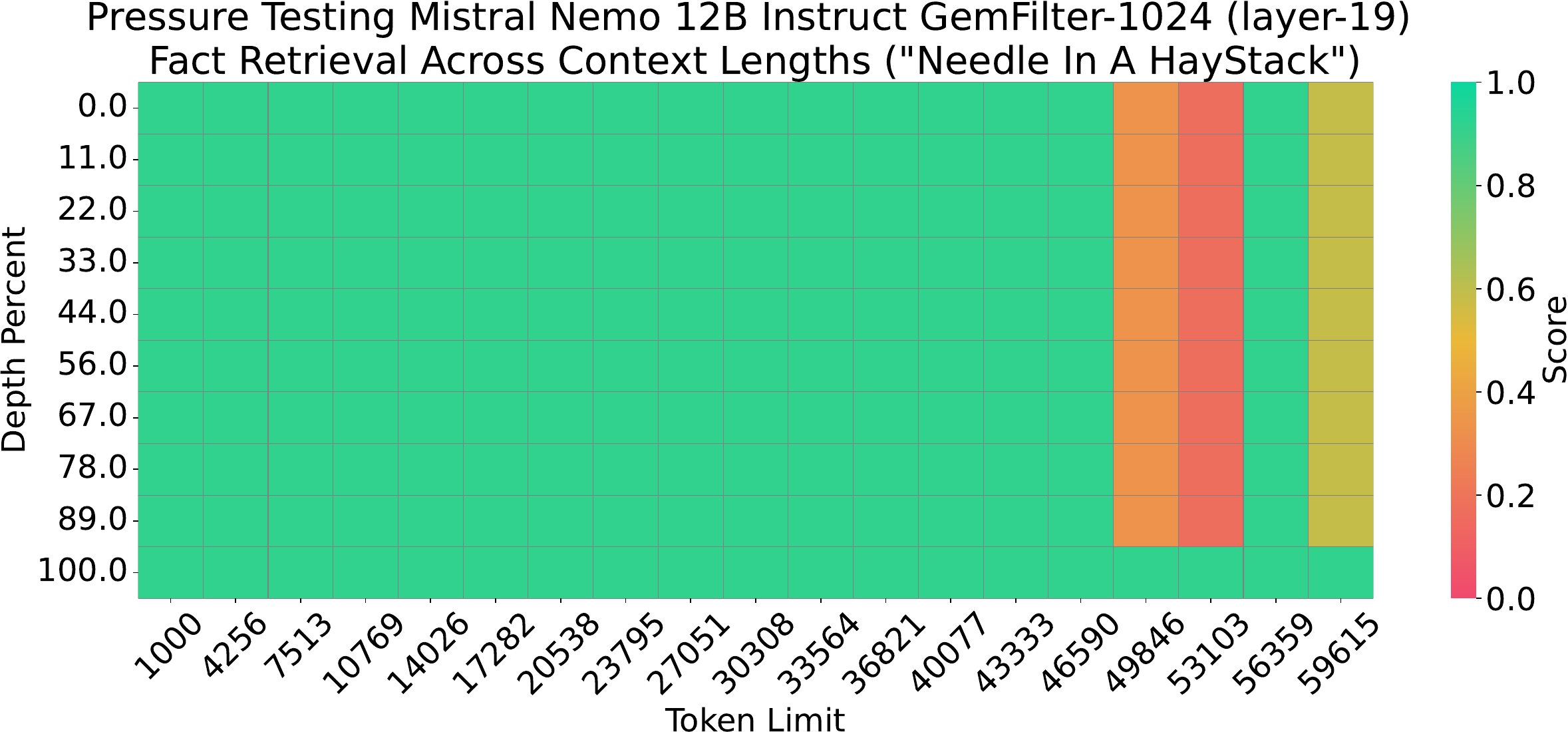}
    \includegraphics[width=0.5\linewidth]{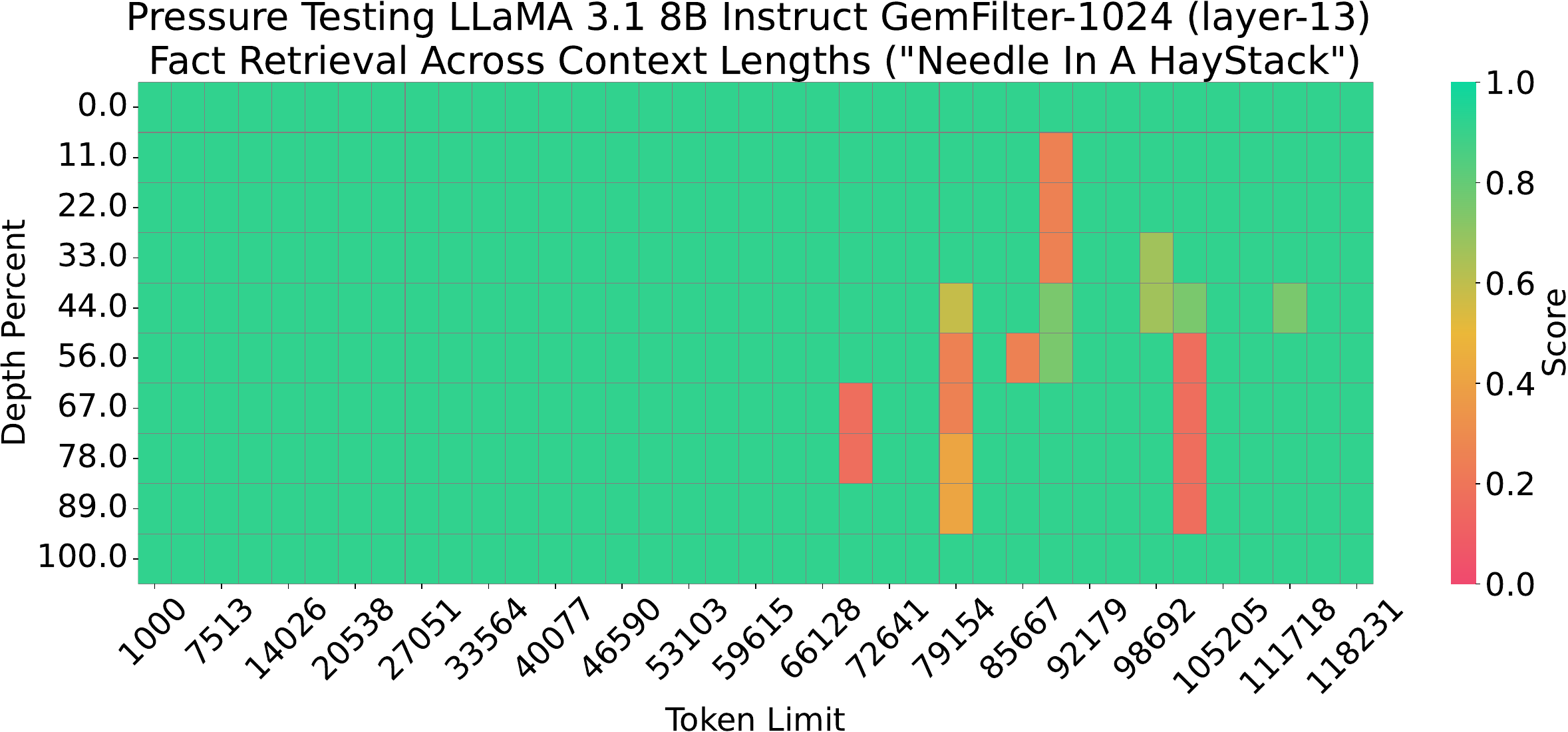}}
    \caption{Needle in a Haystack performance comparison of different methods using the Mistral Nemo 12B Instruct model (left column) and the LLaMA 3.1 8B Instruct model (right column). Results for the Phi 3.5 Mini 3.8B Instruct model are provided in  Appendix~\ref{app:sub:needle}. The $x$-axis represents the length of the input tokens, while the $y$-axis shows the position depth percentage of the `needle' information (e.g., 0\% indicates the beginning, and 100\% indicates the end).  A higher score reflects better performance, meaning more effective retrieval of the `needle' information. GemFilter significantly outperforms both standard attention (full KV cache) and SnapKV.} 
    \label{fig:needle_combine}
\end{figure}

\paragraph{Filter Layer.} 
Except Section~\ref{sec:sub:ablation}, for context selection, we always use the index of 13 out of 32, 19 out of 40, and 19 out of 32 layers as the input filter for LLaMA 3.1, Mistral Nemo and Phi 3.5, respectively. In Section~\ref{sec:sub:ablation}, we provide an ablation study for the filter layer choice.

\subsection{Needle in a Haystack}\label{sec:sub:needle}

The Needle in a Haystack~\citep{needle} benchmark serves as a pressure test, challenging LLMs to retrieve accurate information from a specific sentence (the `needle') hidden within an extensive document (the `haystack'), where the sentence can appear at any arbitrary location. The difficulty increases as the length of the haystack grows. We use input lengths of 60K for Mistral Nemo 12B Instruct and 120K for LLaMA 3.1 8B Instruct, as these are the maximum lengths for standard attention on two A100-40GB GPUs. The KV cache size is set to 1024 for both SnapKV and GemFilter. In Figure~\ref{fig:needle_combine}, we see that GemFilter significantly outperforms both All KV (standard attention) and SnapKV with Mistral Nemo and LLaMA 3.1.\footnote{H2O cannot be implemented with FlashAttention due to its cumulative attention score strategy and is therefore unable to handle super long input contexts, which is why we exclude it here, following \citet{lhy+24, xjd+24}.} The Needle in a Haystack results suggest that our method, GemFilter, achieves superior retrieval performance for long input contexts compared to SnapKV and standard attention. Additional results are provided in Appendix~\ref{app:sub:needle}.

\begin{table}[!t]
\caption{Performance comparison on LongBench across various LLMs and methods. A larger number means better performance. The best score is {\bf boldfaced}.
}\label{tab:longbench}
    \centering 
\resizebox{\textwidth}{!}{\begin{tabular}
{l@{\hspace{0.05ex}}c@{\hspace{0.05ex}}c@{\hspace{0.05ex}}c@{\hspace{0.05ex}}c@{\hspace{0.05ex}}c@{\hspace{0.05ex}}c@{\hspace{0.05ex}}c@{\hspace{0.05ex}}c@{\hspace{0.05ex}}c@{\hspace{0.05ex}}c@{\hspace{0.05ex}}c@{\hspace{0.05ex}}c@{\hspace{0.05ex}}c@{\hspace{0.6ex}}c @{\hspace{0.6ex}}c}
    \toprule
\multirow{4}{*}{Method}& \multicolumn{3}{c}{Single-Document QA} & \multicolumn{3}{c}{Multi-Document QA}& \multicolumn{3}{c}{Summarization}& \multicolumn{3}{c}{Few-shot Learning}& \multicolumn{2}{c}{Synthetic} 
&\multirow{4}{*}{Average} \\
\cmidrule(lr){2-4}\cmidrule(lr){5-7}\cmidrule(lr){8-10}\cmidrule(lr){11-13}\cmidrule(lr){14-15}
& \rotatebox[origin=c]{40}{NrtvQA} & \rotatebox[origin=c]{40}{Qasper} & \rotatebox[origin=c]{40}{MF-en} & \rotatebox[origin=c]{40}{HotpotQA} & \rotatebox[origin=c]{40}{2WikiMQA} & \rotatebox[origin=c]{40}{Musique} & \rotatebox[origin=c]{40}{GovReport} & \rotatebox[origin=c]{40}{QMSum} & \rotatebox[origin=c]{40}{MultiNews} & \rotatebox[origin=c]{40}{TREC} & \rotatebox[origin=c]{40}{TriviaQA} & \rotatebox[origin=c]{40}{SAMSum} & \rotatebox[origin=c]{40}{PCount} & \rotatebox[origin=c]{40}{~PRe~~} & \\
\toprule
\\
\multicolumn{16}{c}{\bf LLaMA 3.1 8B Instruct }\\
    All KV & {32.02} & {13.04} & {27.34} & {16.23} & {16.05} & {11.22} & {\bf 34.52} & {23.41} & {\bf 26.89} & {\bf 73.0} & {91.64} & {43.8} & {7.16} & {97.73} & {\bf 36.72} \\
    H2O-4096 & {22.94} & {12.61} & {26.48} & {16.63} & {15.81} & {10.14} & {33.51} & {23.47} & {26.81} & {69.0} & {91.15} & {\bf 43.97} & {6.66} & {71.67} & {33.63} \\
   \midrule
    SnapKV-1024  & {31.98} & {11.17} & {25.33} & {14.81} & {15.73} & {10.69} & {26.95} & {22.89} & {25.86} & {67.5} & {91.89} & {42.85} & {\bf 7.67} & {98.16} & {35.25} \\
    GemFilter-1024  
    & {20.71} & {11.0} & {\bf 29.28} & {19.12} & {17.01} & {13.01} & {30.37} & {21.75} & {25.17} & {63.0} & {90.7} & {42.5} & {7.15} & {92.22} & {34.50} \\
   \midrule
    SnapKV-2048  & {31.45} & {11.94} & {26.24} & {15.73} & {16.03} & {11.66} & {29.64} & {23.24} & {26.44} & {69.5} & {91.48} & {42.68} & {7.21} & {98.03} & {35.80} \\
    GemFilter-2048  
    & {24.36} & {12.63} & {25.39} & {\bf 19.58} & {\bf 17.03} & {\bf 14.11} & {33.15} & {22.31} & {26.49} & {69.5} & {91.59} & {42.64} & {4.61} & {\bf 98.75} & {35.87} \\
   \midrule
    SnapKV-4096 & {\bf 32.13} & {\bf 13.12} & {27.38} & {16.11} & {16.08} & {11.6} & {32.39} & {23.47} & {26.76} & {71.5} & {91.64} & {43.46} & {7.33} & {97.24} & {36.44} \\
    GemFilter-4096 
    & {25.66} & {12.95} & {27.38} & {17.76} & {15.6} & {12.02} & {34.17} & {23.25} & {26.87} & {70.0} & {\bf 92.36} & {43.34} & {5.96} & {98.0} & {36.09} \\
\bottomrule
\\
\multicolumn{16}{c}{\bf Mistral Nemo 12B Instruct}\\
    All KV & {28.91} & {40.74} & {54.65} & {52.15} & {48.36} & {30.28} & {\bf 30.66} & {\bf 23.53} & {26.31} & {75.0} & {89.66} & {44.32} & {4.5} & {100.0} & {46.36} \\
    H2O-4096 & {\bf 31.61} & {39.52} & {54.75} & {47.83} & {48.09} & {27.0} & {30.44} & {23.21} & {26.42} & {72.5} & {89.76} & {44.47} & {3.0} & {73.0} & {43.69} \\
    \midrule
    SnapKV-1024  & {26.42} & {38.49} & {52.96} & {51.21} & {47.86} & {27.06} & {24.32} & {22.66} & {25.52} & {73.0} & {89.82} & {43.16} & {3.5} & {100.0} & {44.71} \\
    GemFilter-1024 
    & {27.53} & {40.68} & {53.86} & {55.51} & {\bf 55.43} & {34.11} & {27.25} & {21.16} & {25.56} & {69.0} & {87.32} & {42.49} & {4.0} & {88.06} & {45.14} \\
    \midrule
    SnapKV-2048 & {25.85} & {40.69} & {54.48} & {51.96} & {49.06} & {26.95} & {26.29} & {23.17} & {25.9} & {74.5} & {89.66} & {43.89} & {4.0} & {99.5} & {45.42} \\
    GemFilter-2048 
    & {29.27} & {\bf 41.53} & {\bf 54.91} & {57.62} & {54.97} & {\bf 35.09} & {29.34} & {22.58} & {26.19} & {72.0} & {89.65} & {\bf 44.93} & {4.0} & {97.5} & {\bf 47.11} \\
     \midrule
    SnapKV-4096 & {27.92} & {40.9} & {54.75} & {51.69} & {48.16} & {29.19} & {29.17} & {23.36} & {26.35} & {75.0} & {89.66} & {43.93} & {4.5} & {100.0} & {46.04} \\
    GemFilter-4096 
    & {30.29} & {39.9} & {56.48} & {\bf 58.78} & {51.48} & {32.81} & {30.32} & {23.21} & {\bf 26.48} & {71.5} & {\bf 90.24} & {42.13} & {2.0} & {99.5} & {46.79} \\
\bottomrule
\\
\multicolumn{16}{c}{\bf Phi 3.5 Mini 3.8B Instruct}\\
    All KV & {\bf 27.51} & {17.23} & {35.63} & {21.7} & {25.7} & {11.68} & {34.14} & {\bf 23.17} & {24.95} & {\bf 71.5} & {87.37} & {13.08} & {\bf 7.17} & {83.85} & {\bf 34.62} \\
    H2O-4096 & {19.74} & {16.23} & {34.17} & {21.02} & {23.05} & {10.49} & {33.42} & {21.95} & {24.95} & {67.5} & {86.13} & {16.71} & {1.55} & {47.46} & {30.31} \\
   \midrule
    SnapKV-1024 & {24.31} & {16.03} & {34.93} & {20.72} & {26.02} & {13.74} & {28.27} & {22.03} & {24.02} & {67.5} & {\bf 87.71} & {14.57} & {6.08} & {\bf 85.6} & {33.68} \\
    GemFilter-1024 
    & {16.57} & {18.29} & {35.91} & {24.22} & {26.1} & {9.7} & {30.29} & {18.96} & {23.64} & {64.5} & {85.85} & {\bf 23.02} & {0.2} & {81.12} & {32.74} \\
   \midrule
    SnapKV-2048  & {26.41} & {16.59} & {\bf 36.99} & {21.8} & {26.07} & {12.57} & {30.88} & {22.37} & {24.51} & {69.5} & {87.54} & {13.13} & {6.57} & {83.92} & {34.20} \\
    GemFilter-2048 
    & {19.63} & {14.84} & {35.99} & {21.38} & {19.72} & {10.13} & {32.39} & {21.24} & {24.71} & {65.0} & {86.49} & {20.47} & {2.17} & {69.5} & {31.69} \\
   \midrule
    SnapKV-4096 & {27.25} & {17.42} & {36.9} & {21.37} & {25.42} & {12.55} & {32.9} & {22.6} & {24.87} & {70.5} & {87.45} & {13.28} & {6.81} & {84.04} & {34.53} \\
    GemFilter-4096 
    & {20.95} & {\bf 19.98} & {35.22} & {\bf 28.82} & {\bf 28.21} & {\bf 13.98} & {\bf 34.2} & {22.45} & {\bf 25.08} & {64.5} & {85.86} & {18.68} & {3.43} & {65.56} & {33.35} \\
    \bottomrule
\end{tabular}}
\end{table}

\subsection{LongBench}\label{sec:sub:long}

LongBench~\citep{blz+23} is a multi-task benchmark designed to rigorously evaluate long-context understanding capabilities across various datasets, including single- and multi-document Question Answering (QA), summarization, few-shot learning, and synthetic tasks. We evaluate on the English-only dataset, following \citet{lhy+24, xjd+24}.

For each LLM, we evaluate GemFilter and SnapKV with selected tokens/KV caches of 1024, 2048, and 4096. We also evaluated standard attention (all KV cache) and H2O with a KV cache size of 4096 on the LongBench dataset to further demonstrate the performance of GemFilter, following~\citet{lhy+24}. Table~\ref{tab:longbench} shows a negligible performance drop in LLMs using GemFilter compared to standard attention, even with only 1024 selected tokens. In some cases, GemFilter even outperforms standard attention, such as GemFilter-2048 for Mistral Nemo 12B Instruct. It demonstrates significantly better performance than H2O and comparable performance with SnapKV. Furthermore, GemFilter effectively filters key information in long contexts, provides interpretable summaries, and compresses the input context effectively, e.g., it reduces input tokens to an average of 8\% when using 1024 tokens, and 32\% when using 4096, with negligible accuracy drops.

\begin{figure}[!t]
    \centering
    \subfloat[LLaMA 3.1 8B Instruct]{\includegraphics[width=0.33\linewidth]{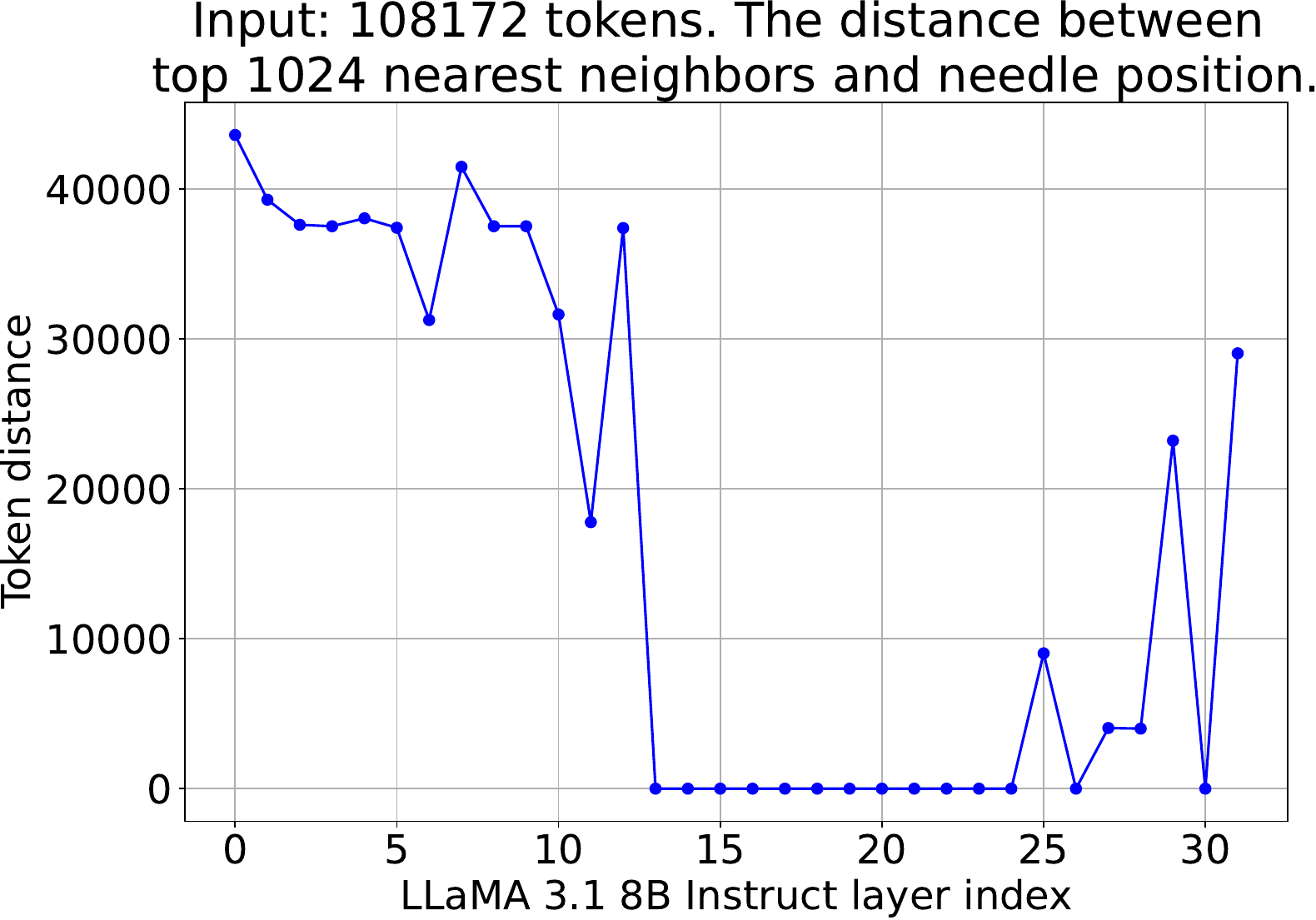}}
    \subfloat[Mistral Nemo 12B Instruct]{\includegraphics[width=0.33\linewidth]{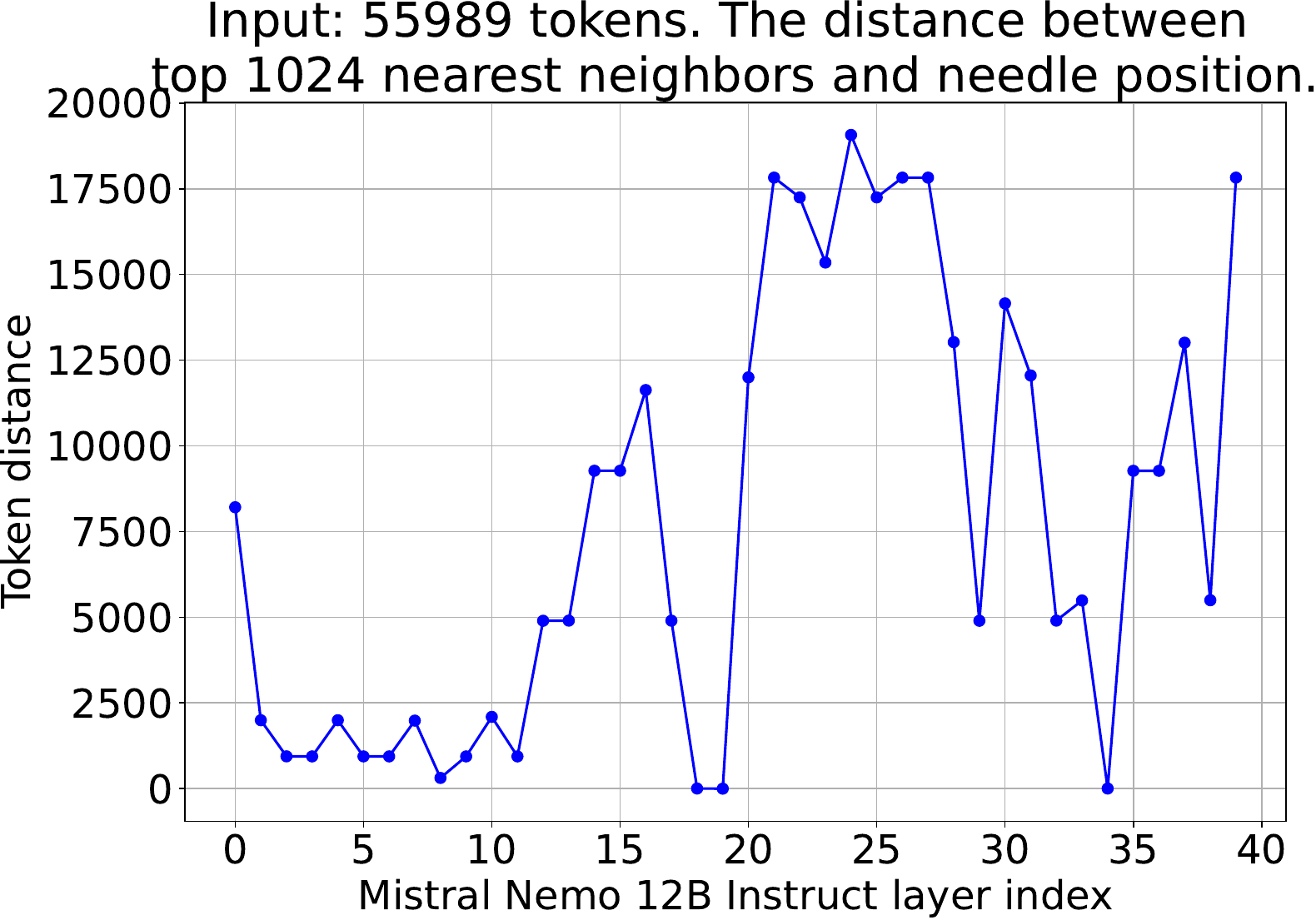}}
    \subfloat[Phi 3.5 Mini 3.8B Instruct]{\includegraphics[width=0.33\linewidth]{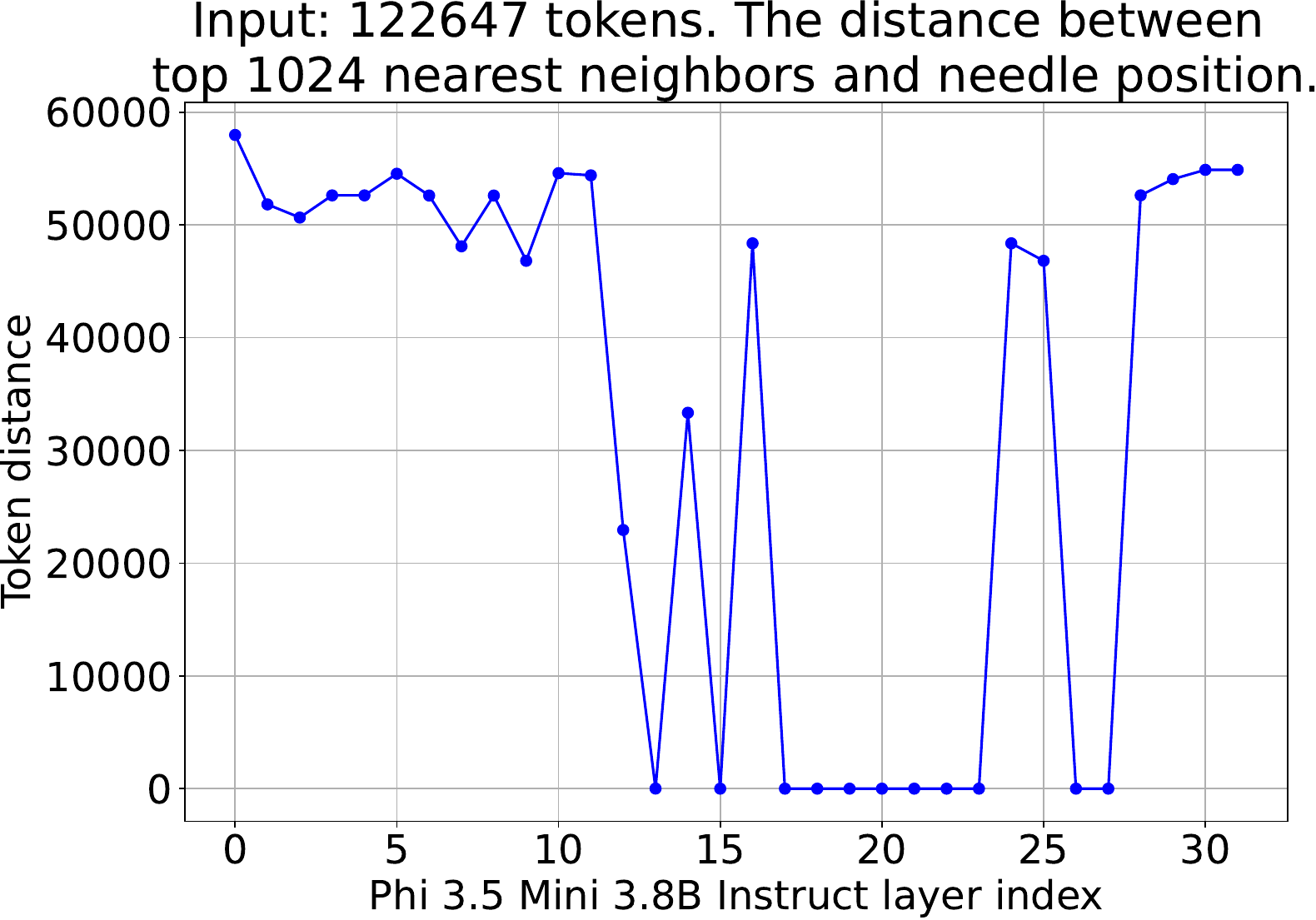}}
    \caption{
    Distance between the needle position and selected token index position across three LLMs. The position depth percentage of the ``needle'' information is 50\%.  The $x$-axis means the layer index of different LLMs. The $y$-axis means $\min($topk\_index $-$ niddle\_index$)$. When $y=0$, it means the needle information is covered by the selected token. The needle information has been successfully discovered in the early layers of all three LLMs.
    }
    \label{fig:dis}
\end{figure}

\subsection{Filter Layer Choice}\label{sec:sub:ablation}

In this section, we explore which layer should be chosen as the input filter. First, we aim to determine which layer of the LLM can best identify the position of the needle information. In Figure~\ref{fig:dis}, we plot the distance between the needle's position and the selected token index across all layers in the LLM. The results reveal three stages in the prompt computation of LLMs. In the first stage, the initial layers preprocess the input context and search for the `needle'. In the second stage, some early to middle layers identify the needle information. Finally, in the third stage, the LLM prepares to generate the output based on the selected tokens.

\begin{table}[!ht]
\caption{Performance of our method on LongBench using different layers as an input filter. A larger number means better performance. The best score is {\bf boldfaced}.}\label{tab:longbench_ablation}
    \centering 
\resizebox{\textwidth}{!}{\begin{tabular}
{l@{\hspace{0.05ex}}c@{\hspace{0.05ex}}c@{\hspace{0.05ex}}c@{\hspace{0.05ex}}c@{\hspace{0.05ex}}c@{\hspace{0.05ex}}c@{\hspace{0.05ex}}c@{\hspace{0.05ex}}c@{\hspace{0.05ex}}c@{\hspace{0.05ex}}c@{\hspace{0.05ex}}c@{\hspace{0.05ex}}c@{\hspace{0.05ex}}c@{\hspace{0.6ex}}c @{\hspace{0.6ex}}c}
    \toprule
\multirow{4}{*}{Filter layer}& \multicolumn{3}{c}{Single-Document QA} & \multicolumn{3}{c}{Multi-Document QA}& \multicolumn{3}{c}{Summarization}& \multicolumn{3}{c}{Few-shot Learning}& \multicolumn{2}{c}{Synthetic} 
&\multirow{4}{*}{Average} \\
\cmidrule(lr){2-4}\cmidrule(lr){5-7}\cmidrule(lr){8-10}\cmidrule(lr){11-13}\cmidrule(lr){14-15}
& \rotatebox[origin=c]{40}{NrtvQA} & \rotatebox[origin=c]{40}{Qasper} & \rotatebox[origin=c]{40}{MF-en} & \rotatebox[origin=c]{40}{HotpotQA} & \rotatebox[origin=c]{40}{2WikiMQA} & \rotatebox[origin=c]{40}{Musique} & \rotatebox[origin=c]{40}{GovReport} & \rotatebox[origin=c]{40}{QMSum} & \rotatebox[origin=c]{40}{MultiNews} & \rotatebox[origin=c]{40}{TREC} & \rotatebox[origin=c]{40}{TriviaQA} & \rotatebox[origin=c]{40}{SAMSum} & \rotatebox[origin=c]{40}{PCount} & \rotatebox[origin=c]{40}{~PRe~~} & \\
\cmidrule{1-16}
\multicolumn{16}{c}{\bf LLaMA 3.1 8B Instruct (32 layers)}\\
    layer-1  & {16.32} & {7.38} & {13.86} & {13.9} & {13.21} & {5.22} & {25.61} & {20.09} & {24.51} & {47.0} & {76.59} & {39.78} & {2.55} & {23.01} & {23.50} \\
    layer-7   & {16.89} & {6.83} & {13.47} & {13.78} & {12.23} & {9.67} & {26.56} & {19.49} & {24.55} & {58.0} & {84.87} & {41.07} & {6.5} & {50.69} & {27.47} \\
   layer-12    & {15.53} & {7.73} & {16.53} & {17.08} & {13.33} & {9.88} & {28.94} & {20.32} & {25.01} & {58.0} & {88.16} & {40.42} & {8.36} & {43.06} & {28.03} \\
    layer-13     & {20.71} & {11.0} & {\bf 29.28} & {19.12} & {17.01} & {13.01} & {\bf 30.37} & {21.75} & {25.17} & {\bf 63.0} & {\bf 90.7} & {\bf 42.5} & {7.15} & {92.22} & {\bf 34.50} \\
    layer-14      & {21.14} & {\bf 13.06} & {25.45} & {20.89} & {\bf 17.32} & {12.9} & {29.85} & {\bf 22.06} & {24.91} & {62.0} & {89.88} & {42.33} & {6.17} & {92.17} & {34.30} \\
    layer-19  & {19.06} & {11.69} & {27.12} & {\bf 20.98} & {16.98} & {\bf 14.04} & {29.17} & {21.88} & {\bf 25.18} & {58.0} & {89.65} & {40.4} & {\bf 8.75} & {\bf 94.84} & {34.12} \\
    layer-25  & {\bf 24.74} & {12.33} & {26.18} & {18.56} & {16.3} & {12.54} & {28.66} & {21.75} & {25.14} & {61.5} & {88.78} & {39.47} & {8.67} & {90.59} & {33.94} \\
    layer-31  & {20.62} & {9.13} & {17.51} & {19.13} & {13.76} & {10.07} & {28.21} & {21.11} & {25.16} & {58.0} & {88.4} & {42.37} & {8.23} & {58.8} & {30.04} \\
    \bottomrule
\end{tabular}}
\end{table}

We then use the first layer that accurately identifies the needle's position as the input filter. In our experiments, we find that this layer remains consistent across different inputs. As shown in Table~\ref{tab:longbench_ablation}, performance first increases and then decreases as we select the input filter layer from the beginning to the end. The peak performance is observed at the 13th layer, which supports our layer selection strategy. Performance remains robust between layers 13 and 25, providing flexibility in layer selection. Exploring the distinct functions of different layers presents an interesting direction for future research.

\subsection{Running Time and GPU Memory Consumption}\label{sec:sub:complexity_exp}

In this section, we compare the running time and GPU memory consumption of different methods with FlashAttention~\citep{dfe+22,d23,sbz+24} support.\footnote{We exclude H2O as it does not support FlashAttention and thus requires more GPU memory and running time than standard attention during prompt computation.} As shown in Figure~\ref{fig:complexity_llama}, our method,  GemFilter, achieves a 2.4$\times$ speedup compared to SnapKV and standard attention, with 30\% and 70\% reductions in GPU memory usage, respectively. It saves both running time and GPU memory by processing the long input context only during the first stage, as described in Section~\ref{sec:sub:ablation}. For the latter two stages, the LLMs only need to handle compressed inputs. 
In Figure~\ref{fig:complexity}, we present a comparison of running time and GPU memory consumption for Mistral Nemo 12B Instruct and Phi 3.5 Mini 3.8B Instruct using various methods. GemFilter runs faster and uses less GPU memory than the state-of-the-art methods, as discussed above. Additionally, Figure~\ref{fig:complexity_llama} and Figure~\ref{fig:complexity} further support our Theorem~\ref{thm:complexity_informal} in Section~\ref{sec:sub:complexity}. 

\begin{figure}[!ht]
    \centering
    \subfloat[Mistral Nemo 12B Instruct]
    {\includegraphics[width=0.495\linewidth]{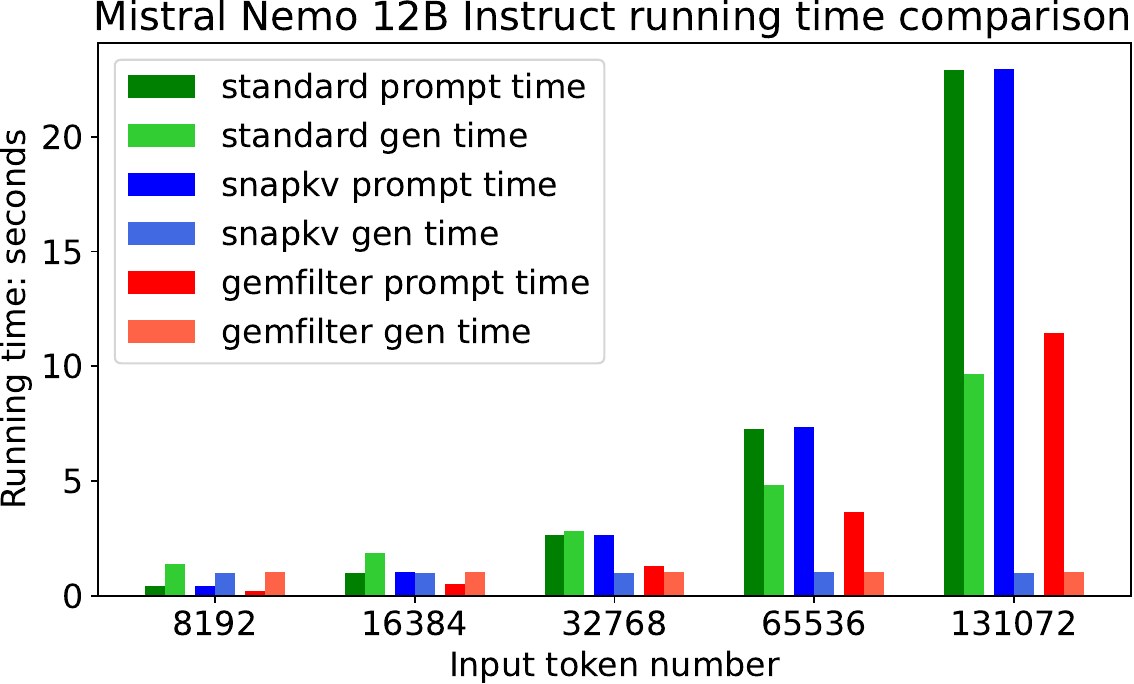}\includegraphics[width=0.495\linewidth]{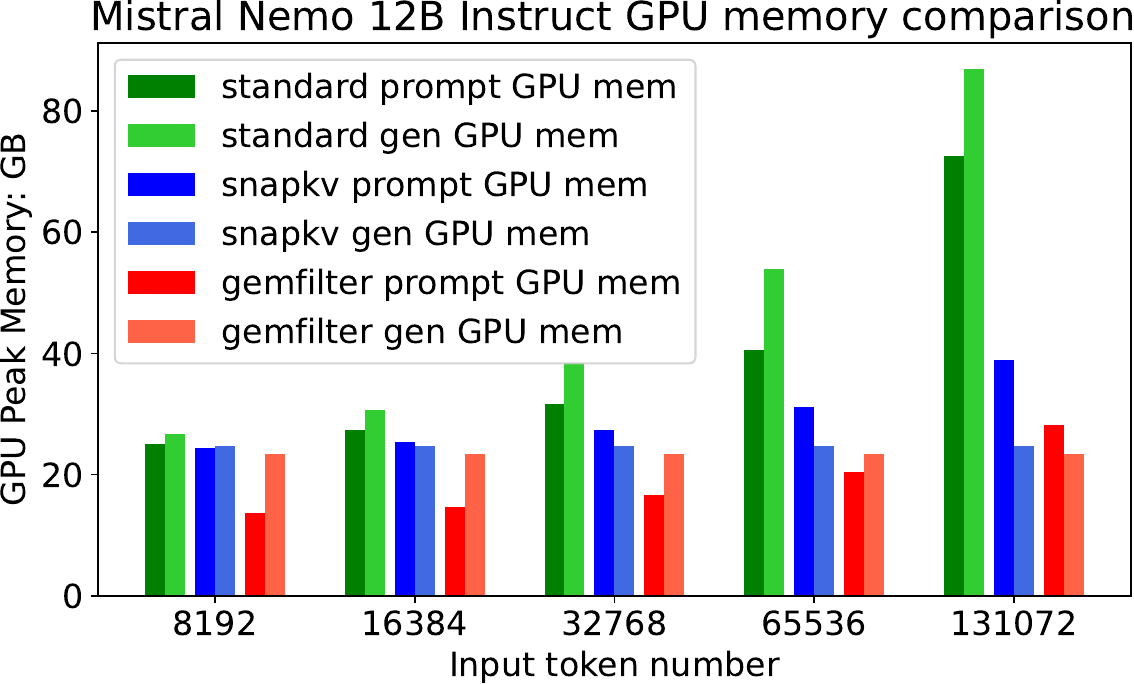}}\\
    \subfloat[Phi 3.5 Mini 3.8B Instruct]
    {\includegraphics[width=0.495\linewidth]{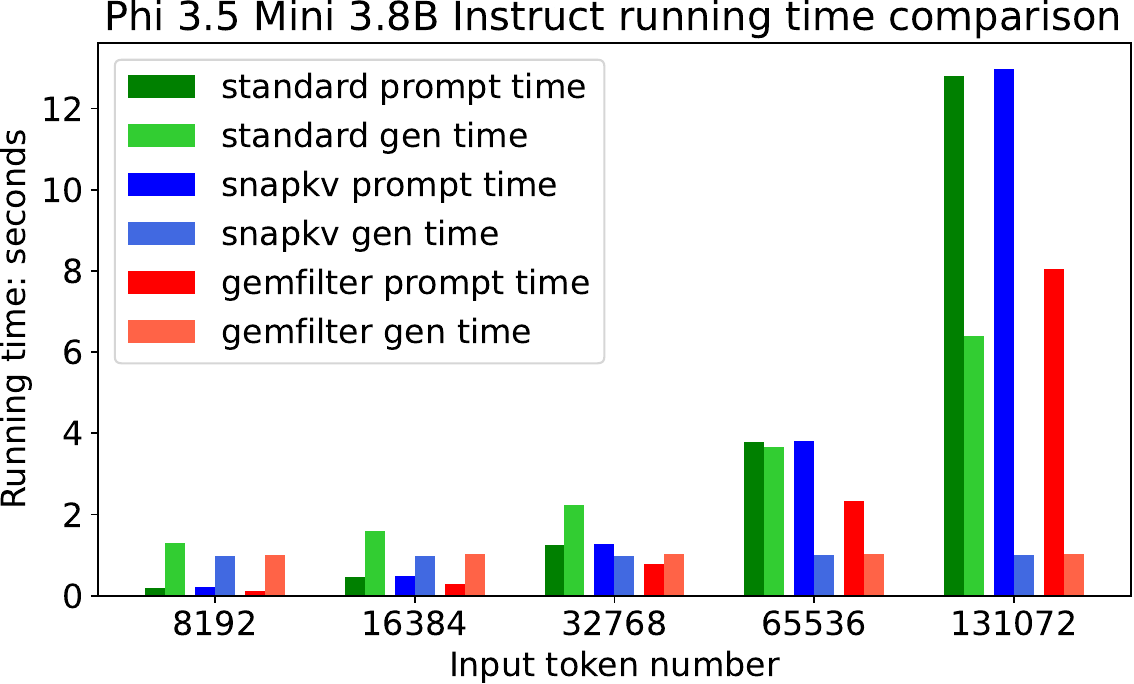}\includegraphics[width=0.495\linewidth]{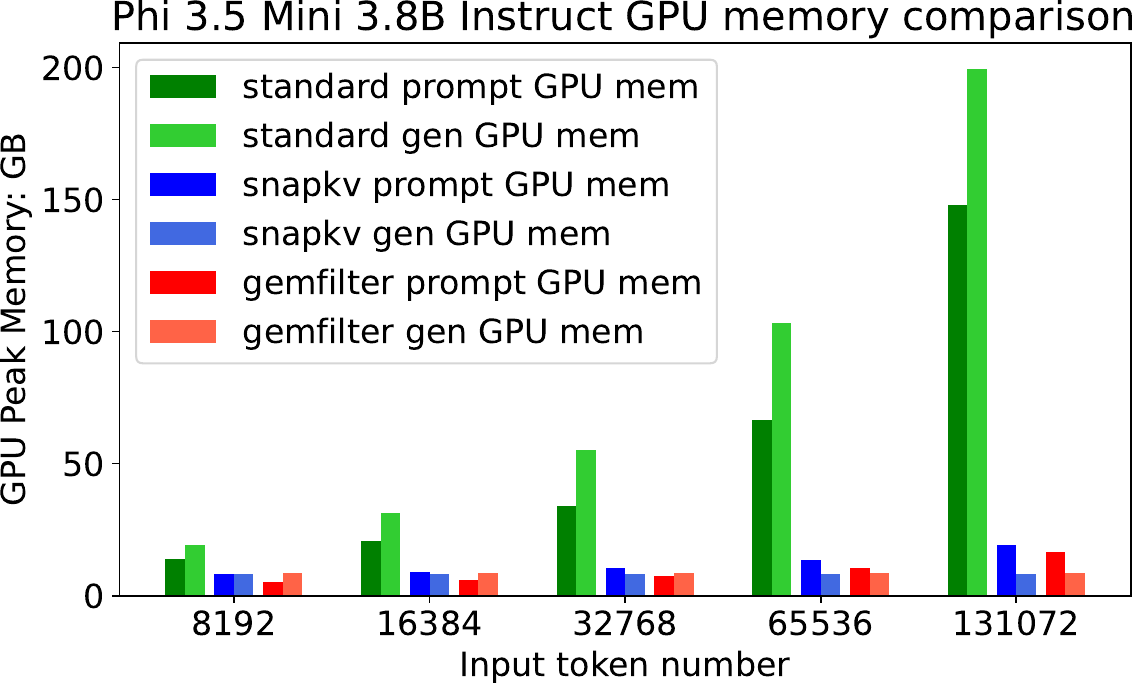}}\\
    \caption{Comparison of time and GPU memory usage across different methods on Mistral Nemo 12B Instruct and Phi 3.5 Mini 3.8B Instruct. GemFilter uses the 19th layer as an input filter for both LLMs. It achieves a 2.4$\times$ speedup and reduces GPU memory usage by 30\% compared to SnapKV.
    }
    \label{fig:complexity}
\end{figure}

\section{Conclusion}
In this work, we presented a novel approach, GemFilter, to accelerate LLM inference and reduce memory consumption for long context inputs. By leveraging the ability of early LLM layers to identify relevant information, GemFilter achieves significant improvements over existing techniques. It demonstrates a 2.4$\times$ speedup and 30\% reduction in GPU memory usage compared to SOTA methods, while also showing superior performance on the Needle in a Haystack benchmark. Our approach is simple, training-free, applicable to various LLMs, and offers enhanced interpretability by directly inspecting selected tokens. These results not only provide practical benefits for LLM deployment, but also provide insight into a better understanding of LLM internal mechanisms.

\ifdefined\isarxiv
\bibliographystyle{alpha}
\bibliography{ref}
\else
\bibliography{ref}
\bibliographystyle{iclr2025_conference}
\fi

\newpage
\onecolumn
\appendix
\begin{center}
	\textbf{\LARGE Appendix }
\end{center}

\ifdefined\isarxiv

\else

{\hypersetup{linkcolor=black}
\bigbreak
\bigbreak
\bigbreak
}
\fi

\section{More Preliminary}
In this section, we introduce some key definitions of language modeling modules. We begin with the input embedding function and the output embedding function. They are functions that bridge between the input token space and the real vector space.

\begin{definition}[Input embedding function and input tokens]\label{def:input_emb}
    The input embedding function $\mathcal{E}: \mathcal{V}^n \to  \R^{n \times d}$ maps the input tokens to hidden features using the vocabulary dictionary $D^{\mathrm{voc}} \in \R^{|\mathcal{V}| \times d}$. Let $T \in \mathcal{V}^n$ be input tokens.
    Then, we have $\mathcal{E}(T) \in \R^{n \times d}$ and $\mathcal{E}(T)_i = D^{\mathrm{voc}}_{T_i} \in \R^{d}$ for any $i \in [n]$.
\end{definition}

\begin{definition}[Output embedding function]\label{def:output_emb}
    The output embedding function $\mathcal{G}: \R^{d} \to \R^{|\mathcal{V}|}$ maps hidden features to the probability logits of the vocabulary dictionary. 
\end{definition}

We introduce Softmax, which allows self-attention to learn the probability distribution rather than function anymore. 

\begin{definition}[Softmax] 
Let $z \in \R^{n}$. We define 
$\mathsf{Softmax}: \R^{n} \to \R^{n}$ satisfying 
\begin{align*}
    \mathsf{Softmax}(z):= \exp(z) / \langle \exp(z) , {\bf 1}_n \rangle.  
\end{align*}
\end{definition}

\section{Proof of Time Complexity}\label{app:proof}
\begin{theorem}[Complexity analysis. Restatement of Theorem~\ref{thm:complexity_informal}]\label{thm:complexity}
Let $n$ be the input sequence (prompt) length and $d$ the hidden feature dimensions. In our Algorithm~\ref{alg:select_gen}, GemFilter uses the $r$-th layer as a filter to select $k$ input tokens. Let SnapKV and H2O also use $k$ as their cache size. Assume the LLM has $m$ attention layers, each with $h$ attention heads, and each transformer layer's parameters consume $w$ GPU memory. Assuming that we generate $t$ tokens with the $\textsc{Gen}$ function and $n \ge \max\{d,k,t\}$, the following table summarizes the complexity for standard attention, SnapKV and H2O, and GemFilter:

\begin{center}
\resizebox{\columnwidth}{!}{\begin{tabular}{lc|c|c|c}
\toprule
\multicolumn{2}{c|}{Complexity}  & Standard attention  & SnapKV and H2O & GemFilter \\
\midrule
\multirow{2}{*}{Time} & Prompt Comp. & $\Theta( mh n^2d )$ & $\Theta( mh n^2d )$ & $\Theta( r hn^2 d )$\\
  & Iter. generation & $\Theta( mh(nt + t^2)d )$ & $\Theta( m h(kt + t^2) d )$ & $\Theta(m h(k^2 + t^2) d )$ \\
  \midrule
\multirow{2}{*}{GPU mem.} & Prompt Comp.  & $mw + 2 mh n d$ & $ mw + 2hnd + 2mhkd $ & $rw + 2hnd $ \\
&  Iter. generation & $mw + 2 mh (n + t) d$ & $ mw + 2mh(k+t)d$ & $mw+2mh(k+t) d$ \\
\bottomrule     
\end{tabular}}
\end{center}
\end{theorem}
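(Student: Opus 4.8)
The plan is to establish each entry of the table by a direct operation count, handling the two phases (prompt computation and iterative generation) separately for each of the three methods, and being careful to track where the parameter weights $w$ and the KV cache contribute to the GPU memory. The unifying observation is that a single self-attention layer on an input of length $\ell$ costs $\Theta(h \ell^2 d)$ time (forming $QK^\top$ and multiplying by $V$ over $h$ heads, with the MLP/layernorm pieces $g_i$ being only $\Theta(\ell d)$ or $\Theta(\ell d^2)$ and hence dominated once $\ell \ge d$), and stores $\Theta(h \ell d)$ numbers for its keys and values.

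First I would handle \emph{prompt computation}. Standard attention runs all $m$ layers on the full length-$n$ prompt, giving time $\Theta(mhn^2 d)$ and, since the whole KV cache is retained across all layers, memory $mw + 2mhnd$ (the factor $2$ for keys and values, the $mw$ for loading all layer weights). SnapKV/H2O also run all $m$ layers on length $n$, so the time is the same $\Theta(mhn^2 d)$; the memory differs only in that after processing each layer they discard all but $k$ cache entries, so the retained cache is $2mhkd$, but they still transiently hold the length-$n$ keys/values of the layer currently being processed, giving $mw + 2hnd + 2mhkd$. GemFilter runs only the first $r$ layers on length $n$ (Line~\ref{line:early}), hence time $\Theta(rhn^2 d)$; it only needs the weights of those $r$ layers loaded and only holds the current layer's length-$n$ cache (the $Q^{(r)},K^{(r)}$ needed for Line~\ref{line:topk}), giving $rw + 2hnd$. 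The $\Theta(nk)$ cost of \texttt{topk\_index} and the $\Theta(k\log k)$ sort in Lines~\ref{line:topk}--\ref{line:sort} are lower-order under $n \ge \max\{d,k,t\}$.

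Next, \emph{iterative generation}. All three methods now run all $m$ layers. Standard attention generates $t$ tokens against a cache that grows from $n$ to $n+t$; summing the per-token attention cost gives $\Theta(mh(nt+t^2)d)$ and peak memory $mw + 2mh(n+t)d$. SnapKV/H2O do the same but against a cache of size $k$ (plus the up-to-$t$ newly generated tokens), giving time $\Theta(mh(kt+t^2)d)$ and memory $mw + 2mh(k+t)d$. GemFilter's second pass (Line~\ref{line:gen}) first re-runs the full $m$-layer forward on the \emph{compressed} length-$k$ prompt $T_J$ — this is the $\Theta(mhk^2 d)$ term — and then generates $t$ tokens exactly as SnapKV does, contributing $\Theta(mh(kt+t^2)d)$; since $n \ge k$, the combined bound is $\Theta(mh(k^2+t^2)d)$ (note $kt \le k^2 + t^2$), and the memory matches SnapKV's $mw + 2mh(k+t)d$.

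The main obstacle is not any single calculation but rather pinning down the accounting conventions so the stated $\Theta(\cdot)$ and additive memory bounds come out \emph{exactly} as written: in particular, justifying that the $g_i$ blocks, the embedding lookup $\mathcal{E}$, the final projection $\mathcal{G}$, the \texttt{topk} and sort, and the RoPE recomputation are all absorbed under the assumption $n \ge \max\{d,k,t\}$; being explicit that "peak" memory is what is reported, so that for GemFilter the first-pass figure $rw+2hnd$ and the second-pass figure $mw+2mh(k+t)d$ are listed on separate rows rather than summed; and clarifying for SnapKV/H2O that the $2hnd$ term reflects the transient single-layer full-length cache during prompt computation while $2mhkd$ is the persistent compressed cache. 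Once these conventions are fixed, each entry follows from the per-layer $\Theta(h\ell^2 d)$ time and $\Theta(h\ell d)$ space primitives by summing over layers and generated tokens.
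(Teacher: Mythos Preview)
Your proposal is correct and follows essentially the same approach as the paper: a direct per-layer operation count using the $\Theta(h\ell^2 d)$ time and $\Theta(h\ell d)$ KV-cache primitives, applied phase by phase and method by method. Your write-up is in fact more explicit than the paper's own proof about why the $2hnd$ term for SnapKV/H2O is a transient single-layer buffer, why GemFilter's ``iterative generation'' row subsumes the second-pass length-$k$ prompt recomputation (yielding the $k^2$ term), and why the auxiliary costs (top-$k$, sort, $g_i$, embeddings) are absorbed under $n\ge\max\{d,k,t\}$; the paper simply asserts most of these without elaboration.
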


\begin{proof}[Proof of Theorem~\ref{thm:complexity_informal}]
We prove each method separately.

{\bf Proof of standard attention:}

During prompting computation, it takes $\Theta(mhn^2d)$ time complexity, as there are $m$ transformer layers, each layer has $h$ attention head, and each head takes $\Theta(n^2 d)$ to calculate the attention ($\mathsf{Attn}_i$ in Definition~\ref{def:multi_layer_self_attn}) and $\Theta(nd)$ for other operations ($g_i$ in Definition~\ref{def:multi_layer_self_attn}). 

During iterative generation, it takes $\Theta(mh(nt + t^2)d)$ time complexity. 

During prompting computation, $mw$ GPU memory consumption is taken for the model weights and $2 mh n d$ GPU memory consumption for the KV cache.

During iterative generation, it takes $mw$ GPU memory consumption for the model weights and $2 mh (n+t) d$ GPU memory consumption for the KV cache.
{\bf Proof of SnapKV and H2O:}

During prompting computation, it takes $\Theta(mhn^2d)$ time complexity, which is the same as standard attention. 

During iterative generation, it takes $\Theta(mh(kt + t^2)d)$ time complexity, as it reduces the KV cache size from $n$ to $k$.

During prompting computation, $mw$ GPU memory is consumed for the model weights, $2hnd$ for the selection of the key-value matrix for each layer, and $2mhkd$ for the selected KV cache.

During iterative generation, $mw$ GPU memory is consumed for the model weights and $2 mh (k + t ) d$ GPU memory is consumed for the KV cache.

{\bf Proof of our Algorithm~\ref{alg:select_gen} GemFilter:}

During prompting computation, GemFilter takes $\Theta(rhn^2d)$ time complexity, which is faster than other methods. 

During iterative generation, it takes $\Theta(mh(k^2  +kt + t^2)d) = \Theta(mh(k^2  + t^2)d)$ time complexity, as it reduces the KV cache size from $n$ to $k$.

During prompting computation, $rw+2hnd$ GPU memory is consumed for the model weights and the selection of the key value matrix for each layer. 

During iterative generation, $mw+2mh(k+t) d$ GPU memory is consumed for the KV cache and model weights. 

Thus, we finish the proof. 
\end{proof}

\section{More Details about Experiments}

\subsection{PyTorch Code}\label{app:sub:code}
We provide the PyTorch code of Algorithm~\ref{alg:select_gen} GemFilter below, where our method only needs a few lines of adaptation based on standard attention\footnote{\scriptsize\url{https://github.com/huggingface/transformers/blob/v4.43-release/src/transformers/models/mistral/modeling_mistral.py}}. 

\begin{lstlisting}[language=Python]
# find the selected input for the specific attention layer
def find_context(self, query_states, key_states, k): 
    # repeat kv for group query attention
    key_states = repeat_kv(key_states, self.num_key_value_groups)
    # only use the last query token for the top k selection
    top_k_indices = top_index(key_states, query_states[:, :, -1:, :], k) 
    # sort the index into the correct order
    return torch.sort(top_k_indices, dim=-1).indecies
    
def top_index(keys, queries, k, kernel=5):
    # calculate the inner product 
    in_pro = torch.matmul(queries, keys.transpose(-1, -2))
    # cumulate the score over all attention heads in one attention layer 
    in_pro = torch.sum(in_pro, dim=1, keepdim=True)
    # use 1D pooling for clustering, similar as SnapKV
    in_pro = F.avg_pool1d(in_pro, kernel=kernel, padding=kernel//2, stride=1)
    return torch.topk(in_pro, k, dim=-1).indices
\end{lstlisting}

\subsection{Implementation Details}\label{app:sub:implement}
All the Needle in a Haystack and LongBench experiments run on A100-40GB GPUs. All the experiments of running time and memory complexity are evaluated on H100-80GB GPUs.
We use HuggingFace v4.43 PyTorch implementation. 
There is no randomness or training in all baseline methods or our method.  
For the SnapKV/H2O, we use $32$ recent size/observation window, which is the optimal choice suggested by \citet{lhy+24, xjd+24}. However, GemFilter does not have an observation window. 
We use a maximum pooling kernel size (line 16 of the PyTorch code below) of 5 for SnapKV and our method.
For generation, we use standard generation (greedy generation)\footnote{\scriptsize\url{https://huggingface.co/docs/transformers/v4.43.2/en/main_classes/text_generation}}, where {\it num\_beams=1, do\_sample = False}.

\ifdefined\isarxiv
\newcommand\figsize{0.68\linewidth}
\else
\newcommand\figsize{0.75\linewidth}
\fi

\begin{figure}[!th]
    \centering
    \subfloat[All KV. Phi 3.5 average score: 0.851.]{\includegraphics[width=\figsize]{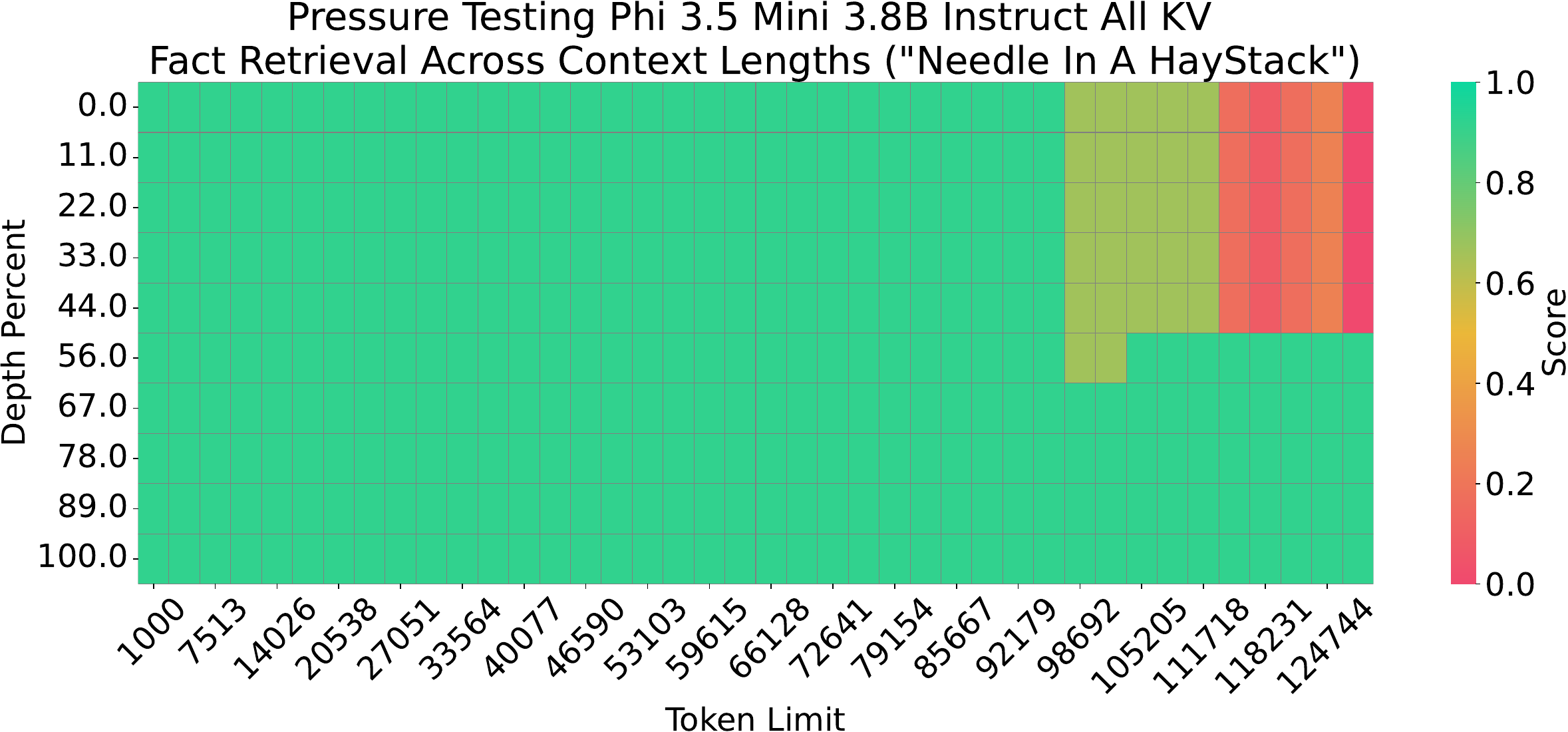}}\\
    \subfloat[SnapKV-1024. Phi 3.5 average score: 0.864.]{\includegraphics[width=\figsize]{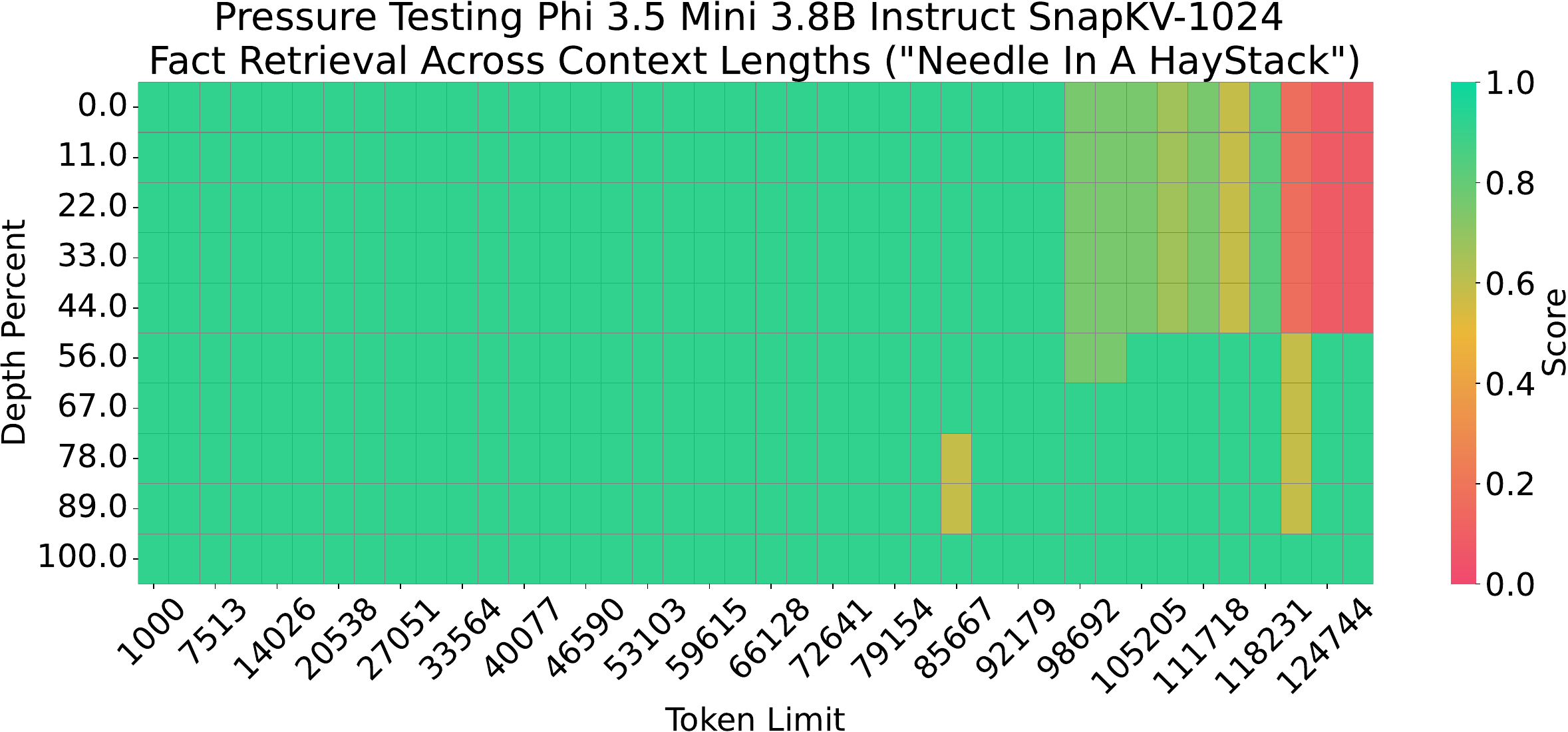}}\\
    \subfloat[GemFilter-1024 (layer-19). Phi 3.5 average score: 0.910.]{\includegraphics[width=\figsize]{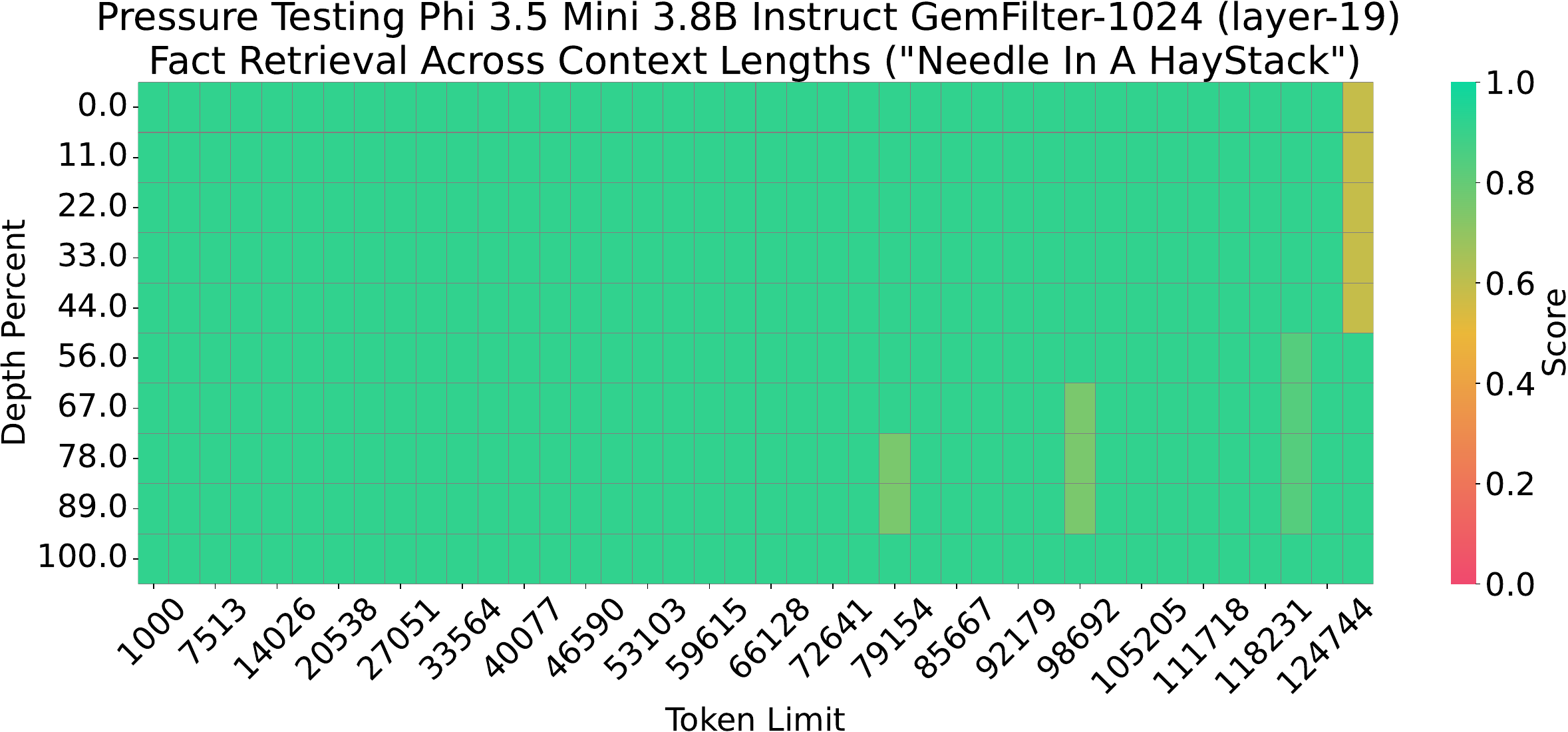}}
    \caption{Needle in a Haystack performance comparison of different methods using the Phi 3.5 Mini 3.8B Instruct model. The $x$-axis represents the length of the input tokens, while the $y$-axis shows the position depth percentage of the `needle' information (e.g., 0\% indicates the beginning, and 100\% indicates the end).  A higher score reflects better performance, meaning more effective retrieval of the `needle' information. GemFilter significantly outperforms both standard attention (full KV cache) and SnapKV.}
    \label{fig:needle_phi3}
\end{figure}

\begin{figure}[!ht]
    \centering
    \subfloat[GemFilter-1024 (layer-14). LLaMA 3.1 average score: 0.870.]{\includegraphics[width=0.75\linewidth]{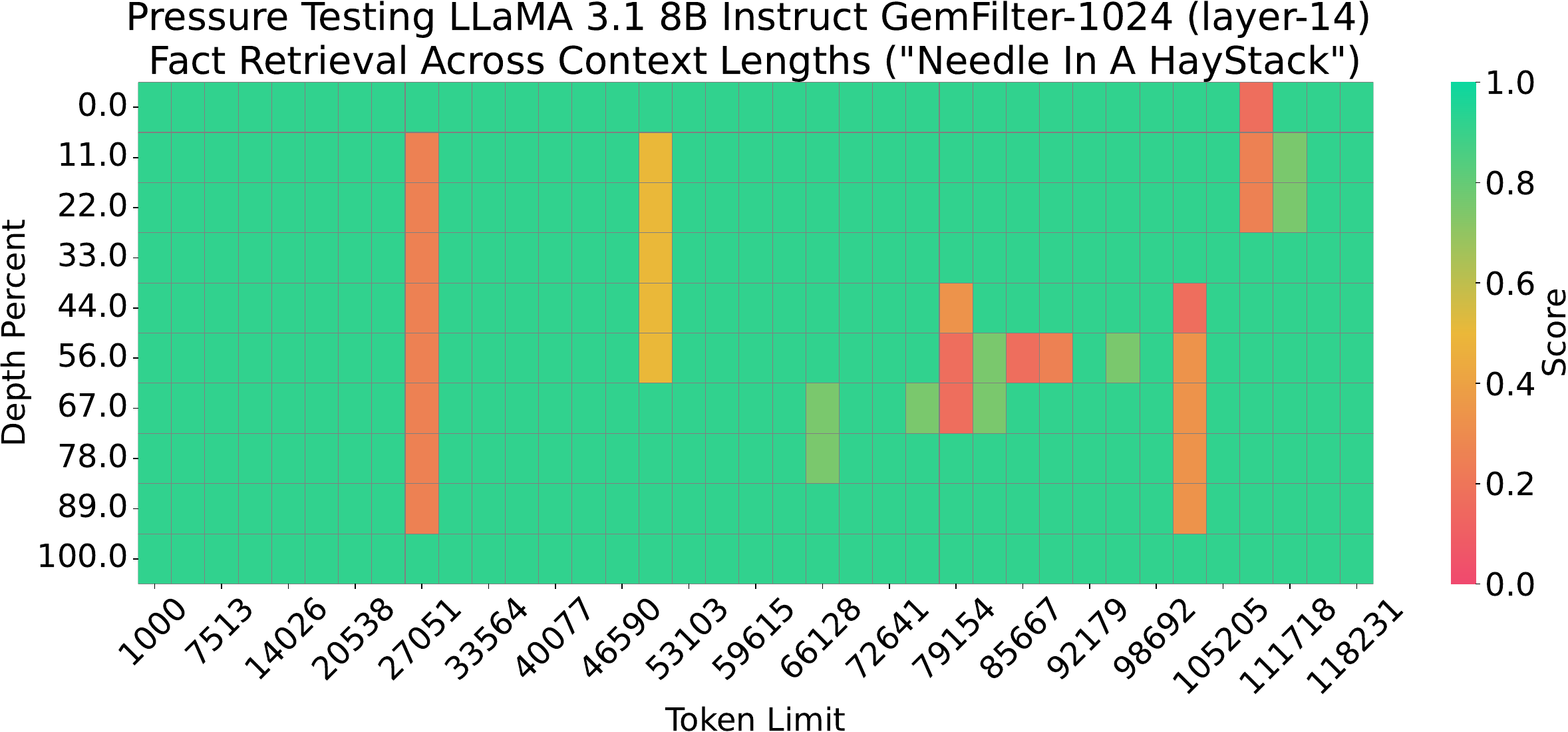}}
    \caption{Needle in a Haystack performance comparison of different filter layers with LLaMA 3.1 8B Instruct model. The $x$-axis represents the length of the input tokens, while the $y$-axis shows the position depth percentage of the `needle' information (e.g., 0\% indicates the beginning, and 100\% indicates the end).  A higher score reflects better performance, meaning more effective retrieval of the `needle' information.}
    \label{fig:needle_ablation}
\end{figure}

\subsection{More Needle in a Haystack}\label{app:sub:needle}
We provide more results of Section~\ref{sec:sub:needle} here. In Figure~\ref{fig:needle_phi3}, GemFilter outperforms All KV (standard attention) and SnapKV by a large margin with Phi 3.5 Mini 3.8B Instruct. In Figure~\ref{fig:needle_ablation}, we use layer 14 of LLama 3.1 as the input filter layer, which is an empirical support of the ablation study in Section~\ref{sec:sub:ablation}, as it can also obtain good performance on the Needle in a Haystack benchmark.




\end{document}